%% file: paper.tex
\documentclass{fairmeta}
\usepackage{amsmath}
\usepackage{enumerate} 
\usepackage{bm}
\usepackage{floatflt}
\usepackage{algpseudocode}
\usepackage{amsfonts}
\usepackage{amsthm}
\usepackage{newtxtt}
\usepackage{colortbl} 
\usepackage{cleveref}
\usepackage{diagbox} 
\usepackage[utf8]{inputenc}
\usepackage{textgreek}
\usepackage{colortbl}
\usepackage{nicematrix}
\usepackage{makecell}
\usepackage{float}
\usepackage{arydshln}
\usepackage[frozencache,cachedir=.]{minted}
\usepackage{subcaption} 
\usepackage{tcolorbox}
\usepackage{amssymb}
\usepackage{xspace}
\usepackage{wrapfig}
\usepackage{adjustbox}
\usepackage{tabularx}
\usepackage{booktabs}
\usepackage{mathtools}
\usepackage{wrapfig}
\usepackage{amssymb}
\usepackage{graphicx}

\usepackage[flushleft]{threeparttable} 
\usepackage{multirow}

\usepackage[ruled,vlined]{algorithm2e} 

\usepackage{silence}
\makeatletter
\patchcmd{\wrong@fontshape}{\@gobbletwo}{}{}{}
\makeatother
\newtheorem{theorem}{Theorem}[]
\newtheorem{lemma}[theorem]{Lemma}

\newtheorem{remark1}[theorem]{Remark}

\definecolor{upColor}{RGB}{17,138,21}
\definecolor{downColor}{RGB}{174,36,67}

\newcommand{\up}[1]{\textcolor{upColor}{#1}}
\newcommand{\down}[1]{\textcolor{downColor}{#1}}

\DeclareRobustCommand{\metaicon}[2]{%
  \makebox[1.35em][c]{\raisebox{-0.18em}{\includegraphics[height=#1]{#2}}}\hspace{0.35em}%
}
\renewcommand\project[1]{\metadata[\metaicon{1.05em}{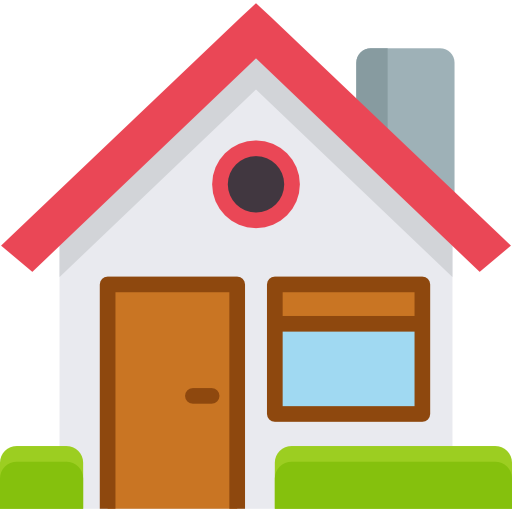}Project]{#1}}
\renewcommand\code[1]{\metadata[\metaicon{1.25em}{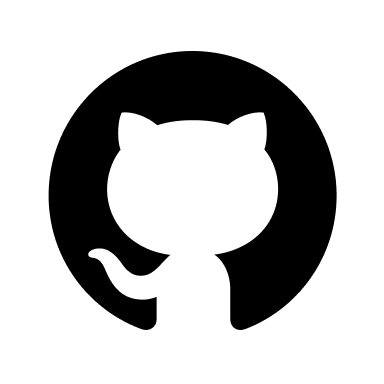}Code]{#1}}
\renewcommand\model[1]{\metadata[\metaicon{1.25em}{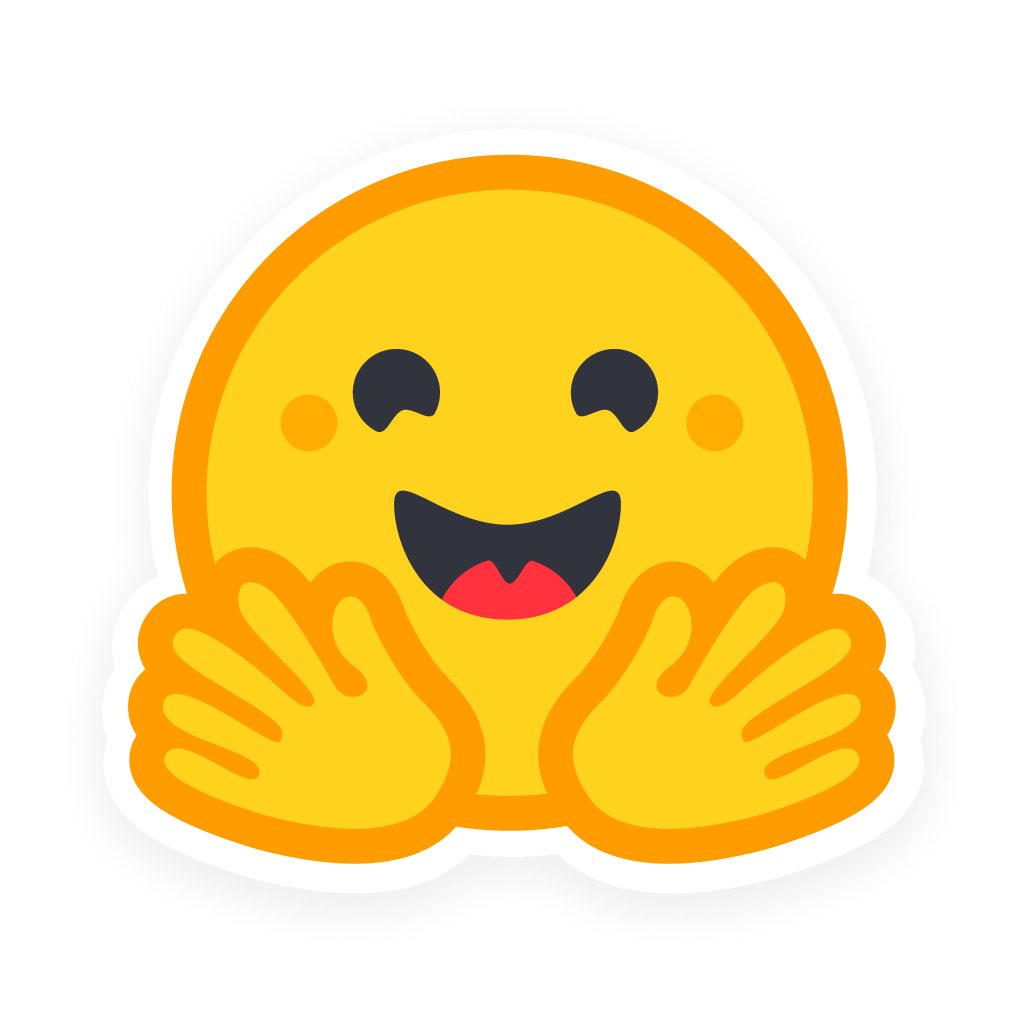}Model]{#1}}

\title{PRTS: A Primitive Reasoning and Tasking System via Contrastive Representations}
\author[1, 2\dagger, \ddagger]{\text{Yang Zhang}}   
\author[1, 3\ddagger]{\text{Jiangyuan Zhao}}
\author[1, 4]{\text{Chenyou Fan}}
\author[1]{\text{Fangzheng Yan}}
\author[1]{\text{Tian Li}}
\author[1]{\text{Haitong Tang}}
\author[1]{\text{Sen Fu}}
\author[1]{\text{Xuan'er Wu}}
\author[1]{\text{Qizhen Weng}}
\author[3]{\text{Weinan Zhang}}
\author[2]{\text{Xiu Li}}
\author[1]{\text{Chi Zhang}}
\author[1,*]{\text{Chenjia Bai}}
\author[1,*]{\text{Xuelong Li}}
\affiliation[1]{Institute of Artificial Intelligence (TeleAI), China Telecom}
\affiliation[2]{Tsinghua University}
\affiliation[3]{Shanghai Jiao Tong University}
\affiliation[4]{Fudan University}
\contribution{\textsuperscript{$\dagger$}Project Lead$\quad$\textsuperscript{$\ddagger$}Equal Contributions$\quad$\textsuperscript{*}Corresponding Authors}
\date{April, 2026}
\project{\url{https://rhodes-team-prts.github.io/}}
\code{\url{https://github.com/TeleHuman/PRTS}}
\model{\url{https://huggingface.co/TeleEmbodied/PRTS-4B}}
\metadata[Correspondence to]{Yang Zhang (\email{breezeyoung9470@gmail.com}) and Chenjia Bai (\email{baicj@chinatelecom.cn})\\ \\ }

\begin{document}
\input{sec/0_abs}

\maketitle

\input{sec/1_intro}
\input{sec/2_preliminaries}
\input{sec/3_method}

\input{sec/4_dataset}
\input{sec/5_related_work}
\input{sec/6_exp}
\input{sec/7_conclusion}

\clearpage

\bibliographystyle{plainnat}
\bibliography{paper}

\input{sec/X_suppl}

\end{document}

%% file: sec/0_abs.tex
\abstract{

Vision-Language-Action (VLA) models advance robotic control via strong visual-linguistic priors.
However, existing VLAs predominantly frame pretraining as supervised behavior cloning, overlooking the fundamental nature of robot learning as a goal-reaching process that requires understanding temporal task progress.
We present \textbf{PRTS} (\textbf{P}rimitive \textbf{R}easoning and \textbf{T}asking \textbf{S}ystem), a VLA foundation model that reformulates pretraining through Goal-Conditioned Reinforcement Learning.
By treating language instructions as goals and employing contrastive reinforcement learning, PRTS learns a unified embedding space where the inner product of state-action and goal embeddings approximates the log-discounted goal occupancy, the probability of reaching the language-specified goal from the current state-action, quantitatively assessing physical feasibility beyond static semantic matching.
PRTS draws this dense goal-reachability supervision directly from offline trajectories without reward annotations, and folds it into the VLM backbone via a role-aware causal mask, incurring negligible overhead over vanilla behavior cloning.
This paradigm endows the high-level reasoning system with intrinsic goal reachability awareness, bridging semantic reasoning and temporal task progress, and further benefits goal-conditioned action prediction.  
Pretrained on 167B tokens of diverse manipulation and embodied-reasoning data, PRTS reaches state-of-the-art performance on LIBERO, LIBERO-Pro, LIBERO-Plus, SimplerEnv, and a real-world suite of 14 complex tasks, with particularly substantial gains on long-horizon, contact-rich, and zero-shot novel-instruction settings, confirming that injecting goal-reachability awareness significantly improves both execution success and long-horizon planning of general-purpose robotic foundation policies.

}

%% file: sec/1_intro.tex
\section{Introduction}
\label{sec:intro}
Representation learning lies at the heart of robot learning, where the core challenge centers on extracting actionable knowledge from high-dimensional sensory inputs to enable goal-conditioned decision making. Vision-Language-Action (VLA) models have made remarkable strides by leveraging Vision-Language Models (VLMs) as powerful perceptual backbones, adopting a \emph{dual-system}---or \emph{Mixture-of-Transformers} (MoT)---architecture: System 2 (the VLM) handles high-level semantic understanding and task reasoning, while System 1 (the action expert) executes low-level continuous control \citep{black2024pi0, black2025pi05,bjorck2025gr00tn1}.
However, this paradigm overlooks a fundamental characteristic of embodied interaction: \emph{robotic trajectory is inherently a goal-reaching process over time}. This requires the representations inside a general-purpose robotic policy to not only encode static visual-linguistic semantic correlations, but also encapsulate the underlying \emph{temporal progress toward ultimate goal}---specifically, how close it is to achieving the goal at the current state.
While VLMs excel at encoding static semantic similarities between scenes and concepts \citep{robobrain,robovlm}, they intrinsically lack any notion of \emph{goal reachability}---a quantitative estimate of how likely a given state-action pair is to eventually achieve the language-specified objective.

Existing VLAs learn visuomotor priors almost exclusively through behavior cloning during pretraining \citep{kim2024openvla, zitkovich2023rt2, black2024pi0, black2025pi05, bjorck2025gr00tn1, zhai2025wallx, spiritv15, wu2026pragmatic}. Although some works \citep{black2025pi05, zhai2025wallx} additionally incorporate VQA data to preserve semantic reasoning capabilities, these approaches still focus on static understanding and immediate action prediction, without explicitly injecting temporal awareness of goal reachability into the pretrained VLA.
Consequently, System~2 operates with only qualitative semantic reasoning capabilities (determining ``\emph{what to do}'') but without the quantitative awareness of goal reachability needed to evaluate execution difficulty or to distinguish difficult-yet-reachable states from easy-yet-erroneous ones.

To address this deficiency, we reformulate VLA pretraining through the lens of \emph{Goal-Conditioned Reinforcement Learning}.
In particular, we adopt Contrastive Reinforcement Learning (CRL) \citep{eysenbach2022contrastiverl}, recently shown to scale to over 1000-layer deep networks \citep{1000-layer}, which learns paired state-action and goal representations whose similarity approximates the reachability of goals under given state-action pairs.
This formulation naturally equips the VLM backbone with temporal awareness of goal reachability, directly within its representation space.

In this paper, we present \textbf{PRTS} (\textbf{P}rimitive \textbf{R}easoning and \textbf{T}asking \textbf{S}ystem), a novel VLA foundation model that leverages \emph{Goal-Conditional RL} for representations learning in VLA pre-training phase.
Our approach introduces three key technical designs.
\textbf{(i)~Language-conditioned contrastive RL for representation learning.} 
Unlike prior VLAs \citep{black2024pi0, black2025pi05, bjorck2025gr00tn1, spiritv15, wu2026pragmatic} that pre-train the policy purely by behavior cloning, PRTS reformulates policy learning as a goal-conditioned representation learning problem with language instructions as goals.
Specifically, we construct positive pairs using state-action pairs from trajectories paired with their corresponding text instructions, and negative pairs using text goals from different tasks.
However, in contrast to standard CRL where future goals are sampled from a geometric distribution within the same episode, the goal is a language instruction shared across all timesteps in language-conditioned manipulation setup.
To bridge this gap, we transform the geometric distribution sampling into a temporal weighting scheme over state-action pairs.
This formulation drives the model to learn a representation space where the similarity between state-action embeddings and goal embeddings encodes the discounted goal occupancy measure, thereby capturing the temporal structure of goal-reaching behavior. 
\textbf{(ii)~Implicit dense goal-reachability supervision.}
The contrastive objective is built upon a goal-reaching reward assumption where the reward is defined as the transition probability of reaching the goal from the current timestep \citep{eysenbach2022contrastiverl}.
This enables the model to extract dense supervision and acquire a dense goal-conditioned action-value function directly from trajectory structure without manual reward annotations, quantifying the long-term utility of state-action pairs in purely offline pretraining.
\textbf{(iii)~Single-forward-pass design at negligible cost.}
Adding two contrastive heads to a VLM typically requires a second forward pass or a separately trained value network~\citep{pi-star-0.6}. 
PRTS instead appends two small token blocks, \texttt{<CRL\_action>} and \texttt{<CRL\_goal>}, to the standard input sequence and enforces information isolation through a role-aware causal mask integrated into a custom FlashAttention kernel~\citep{zadouri2026flashattention4}. 
The autoregressive action tokens retain standard causal attention, ensuring that the behavior-cloning loss remains identical with or without the CRL tokens. 
Meanwhile, the CRL tokens extract state-action embeddings and goal embeddings from isolated information streams within the same forward pass. 
As a result, end-to-end pretraining incurs negligible additional cost compared to vanilla behavior cloning on the same backbone.

Notably, PRTS unifies semantic reasoning, discrete action prediction, and goal-conditioned representation learning within a single VLM architecture, enjoying joint optimization and shared vision-language representations.
It equips the VLM with a quantitative notion of goal reachability that goes well beyond static semantic understanding.
Learning to estimate goal reachability requires the backbone to acquire task-relevant vision-language representations that encode the functional affordances required for task completion.

Built upon this architecture, we construct a large-scale pre-training dataset comprising diverse robotic trajectories with language annotations. 
We train the PRTS foundation model on over 167B tokens, employing infrastructure optimizations including a custom CuTe-FlashAttention kernel and sequence packing to achieve high-throughput training on 64 H100 GPUs for one week.
We subsequently fine-tune and evaluate the model on diverse downstream tasks: in simulation, we validate foundational manipulation capabilities on standard benchmarks including LIBERO \citep{liu2023libero}, LIBERO-Pro~\citep{zhou2025liberopro}, LIBERO-Plus~\citep{fei2025liberoplus}, and SimplerEnv (WidowX)~\citep{simpler};
in real-world deployment, we establish a comprehensive evaluation suite comprising 14 complex real-world manipulation tasks across a dual-arm RealMan platform and a single-arm Flexiv platform.
Experimental results demonstrate that PRTS achieves state-of-the-art performance in both simulation and real-world environments, exhibiting substantial advantages in long-horizon execution, zero-shot novel-instruction generalization, and recovery under human interventions.
These findings validate the efficacy of learning goal-reaching representations for enhancing planning robustness and execution success rates in VLA systems.

%% file: sec/2_preliminaries.tex
\section{Preliminaries}\label{sec:preliminaries}

\textbf{Goal-Conditioned Reinforcement Learning.~}
We consider the standard goal-conditioned reinforcement learning (GCRL) formulation, where an agent interacts with a Markov Decision Process (MDP) to reach a desired goal state $s_g \in \mathcal{S}$, where $\mathcal{S}$ is the state space. The environment is defined by state space $\mathcal{S}$, action space $\mathcal{A}$, transition dynamics $p(s' \mid s, a)$, and initial state distribution $p_0(s)$. 

Following \citet{eysenbach2022contrastiverl}, we focus on the \emph{goal-reaching reward} function, which quantifies the immediate probability of reaching the goal:
\begin{align}
    r_g(s_t, a_t) \triangleq (1-\gamma) \, p(s_{t+1} = s_g \mid s_t, a_t),
\end{align}
where $\gamma \in (0,1)$ is the discount factor. Unlike sparse 0/1 rewards, this formulation assigns $(1-\gamma)$ scaled by the transition probability to $s_g$, implicitly capturing the likelihood of goal achievement through the dynamics rather than 
hard thresholding.

Given a goal-conditioned policy $\pi(a \mid s, s_g)$, the $Q$-function represents the expected cumulative discounted return:
\begin{align}
    Q^{\pi}_{s_g}(s, a) \triangleq \mathbb{E}_{\pi} \left[ \sum_{t'=t}^{\infty} \gamma^{t'-t} r_g(s_{t'}, a_{t'}) \,\middle|\, s_t=s, a_t=a \right].
\end{align}

\textbf{Discounted State Occupancy Measure.~}
To establish the connection between value estimation and probability, we introduce the \emph{discounted state occupancy measure} \citep{ sutton2018reinforcement}. For a policy $\pi$ conditioned on goal $s_g$, the occupancy measure of state $s$ is defined as:
\begin{align}
    p^{\pi(\cdot \mid \cdot, s_g)}(s_{t+} = s) \triangleq (1-\gamma) \sum_{t=0}^{\infty} \gamma^t p^{\pi(\cdot \mid \cdot, s_g)}_t(s_t = s),
\end{align}
where $p^{\pi}_t(s_t = s)$ denotes the probability density of visiting state $s$ after executing $\pi$ for exactly $t$ steps starting from $p_0(s)$. This measure admits an intuitive sampling procedure: one first samples a time offset $t \sim \text{Geom}(1-\gamma)$ from a geometric distribution, then rolls out policy $\pi$ for $t$ steps. The resulting state follows the occupancy distribution $p^{\pi}(s_{t+})$. This property forms the theoretical foundation for constructing positive samples in standard contrastive RL.

\textbf{Q-Function as Occupancy Probability.~} 
A critical insight from goal-conditional RL is that the $Q$-function for goal-reaching rewards corresponds exactly to the conditional occupancy probability:

\begin{lemma}[\citet{eysenbach2022contrastiverl}]
    For the goal-reaching reward $r_g$, the Q-function equals the probability of reaching goal $s_g$ under the discounted state occupancy measure starting from $(s, a)$:
    \begin{align}
        Q^{\pi}_{s_g}(s, a) = p^{\pi(\cdot \mid \cdot, s_g)}(s_{t+} = s_g \mid s, a).
    \end{align}
\label{lemma:goal_reachability_as_q}
\end{lemma}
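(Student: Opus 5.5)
The approach is to expand the $Q$-function by linearity of expectation and recognize the conditional occupancy measure, with the same index shift that appears in \citet{eysenbach2022contrastiverl}. Start from
\begin{align}
Q^{\pi}_{s_g}(s,a) = \sum_{k=0}^{\infty} \gamma^{k}\, \mathbb{E}_{\pi}\!\left[ r_g(s_{t+k}, a_{t+k}) \,\middle|\, s_t=s, a_t=a \right].
\end{align}
Pulling the expectation inside the sum is justified by Tonelli's theorem, because $r_g \ge 0$. Then substitute the definition $r_g(s_{t+k},a_{t+k}) = (1-\gamma)\, p(s_{t+k+1}=s_g \mid s_{t+k},a_{t+k})$.

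The key step is the tower property, which collapses each summand into a marginal density. The rollout is a Markov chain: $a_{t+k}\sim\pi(\cdot\mid s_{t+k},s_g)$ for $k\ge1$, with $a_t=a$ fixed. Hence
\begin{align}
\mathbb{E}_{\pi}\!\left[ p(s_{t+k+1}=s_g \mid s_{t+k},a_{t+k}) \,\middle|\, s,a \right] = p^{\pi}_{k+1}(s_{t+k+1}=s_g \mid s,a),
\end{align}
namely the density of being at $s_g$ exactly $k+1$ steps after starting from $(s,a)$. For $k=0$ this is just the dynamics $p(s_{t+1}=s_g\mid s,a)$. For $k\ge 1$ it follows from the Chapman--Kolmogorov composition of the dynamics with $\pi$.

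Summing gives
\begin{align}
Q^{\pi}_{s_g}(s,a) = (1-\gamma)\sum_{k=0}^{\infty}\gamma^{k} p^{\pi}_{k+1}(s_{t+k+1}=s_g\mid s,a).
\end{align}
This is exactly the conditional version of the discounted occupancy measure from the Preliminaries, with the geometric offset starting at one step into the future rather than zero. The conditioning on $(s,a)$ replaces the initial distribution $p_0$.

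The main obstacle is this indexing convention rather than any analysis. The displayed definition of the occupancy measure sums from $t=0$. The reward, however, looks one step ahead, so the identity holds precisely when $p^{\pi}(s_{t+}=s_g\mid s,a)$ is read as the future-state occupancy $(1-\gamma)\sum_{k\ge0}\gamma^k p_{k+1}$. This is the convention of \citet{eysenbach2022contrastiverl}, and equivalently the convention in which $t\sim\mathrm{Geom}(1-\gamma)$ counts steps starting from one. I would state this convention explicitly when defining the conditional occupancy, note that it excludes the current state, and then the equality is immediate.

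Finally, I would remark on continuous states, where $p$ denotes densities. The argument is unchanged, since every exchange of sum and integral involves nonnegative terms.
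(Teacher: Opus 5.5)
Your argument is correct and is essentially the standard one. The paper gives no proof of its own and simply cites \citet{eysenbach2022contrastiverl}, whose proof is exactly this expansion of $Q^{\pi}_{s_g}$: substitute $r_g$, exchange sum and expectation, and recognize the one-step-shifted discounted occupancy measure. Your index-shift caveat is well placed and matches the paper's own definition $p^{\pi}(s_{t+}=s_g\mid s,a)=(1-\gamma)\sum_{k\ge1}\gamma^{k-1}p(s_{t+k}=s_g\mid s_t,a_t)$ in Sec.~\ref{sec:language_crl} (just after Eq.~\eqref{eq:f-star}), whereas the unshifted sum from $k=0$ used in the appendix proof of Theorem~\ref{thm:equivalence} is off by a factor of $\gamma$ (harmless there, since it is absorbed into $C(l)$).
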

This equivalence transforms the problem of value estimation into one of probability estimation: estimating how likely the agent is to visit $s_g$ in the future when following $\pi$ from $(s, a)$.

\textbf{Contrastive RL Framework.~}
Building upon this equivalence, Contrastive RL estimates the $Q$-function via contrastive representation learning, avoiding the bootstrapping errors inherent in Temporal-Difference (TD) learning. The critic $f(s, a, s_g)$ measures the correlation between a state-action pair $(s, a)$ and a future goal $s_g$. Following standard practice, we parameterize it as the inner product between encoded representations:
\begin{align}
f(s, a, s_g) = \phi(s, a)^{\top} \psi(s_g),
\end{align}
where $\phi: \mathcal{S} \times \mathcal{A} \rightarrow \mathbb{R}^d$ is the state-action encoder and $\psi: \mathcal{S} \rightarrow \mathbb{R}^d$ is the goal encoder. 

Contrastive RL objective learns $f$ by discriminating between 
\emph{achievable future states} (positive samples) and 
\emph{random states} (negative samples), both denoted as $s_g$ 
but drawn from different distributions:
\begin{itemize}
    \item \textbf{Positive:} A goal state sampled from the discounted 
    occupancy measure, $s_g^+ \sim p^{\pi(\cdot \mid \cdot)}(s_{t+} \mid s, a)$, 
    representing a state actually reachable by $\pi$ from $(s, a)$;
    \item \textbf{Negative:} A goal state sampled from the marginal 
    data distribution, $s_g^- \sim p(s_g)$, representing a random 
    state unrelated to the current policy.
\end{itemize}

The critic is trained via the CRL objective based on the infoNCE objective~\citep{info-nce}:
\begin{align}\label{eq:l-crl}
    \mathcal{L}_{\rm crl}=\max_{f} \,\, \mathbb{E}_{\substack{(s,a) \sim p(s,a), \\ 
    s_g^+ \sim p^{\pi}(s_{t+} \mid s,a), \\ 
    s_g^- \sim p(s_g)}} 
    \Big[ \log \frac{\exp (f(s, a, s_g^+)) }{ \exp(f(s,a,s_g^+)) + \sum \exp (f(s,a,s_g^-)) } \Big],
\end{align}
The optimal critic satisfies: $f^*(s, a, s_g) = \log \left( \frac{p^{\pi}(s_{t+}=s_g \mid s,a)}{p(s_g)} \right) + c(s,a)$, where $c(s,a)$ is a function independent of $s_g$. 
Leveraging Lemma~\ref{lemma:goal_reachability_as_q} where $Q^{\pi}_{s_g}(s,a) = p^{\pi}(s_{t+}=s_g \mid s,a)$, 
this yields the direct correspondence:
\begin{align}
    f^*(s, a, s_g) = \log Q^{\pi}_{s_g}(s,a) - \log p(s_g) + c(s,a),
\end{align}
implying $\exp(f^*(s, a, s_g)) \propto Q^{\pi}_{s_g}(s,a)$ up to a state-action-dependent normalization. Thus, the critic provides 
a valid estimate of the relative Q-values for policy evaluation.

%% file: sec/3_method.tex
\section{Method}

\subsection{Problem Setup}
\label{sec:problem}

We consider a language-conditioned robotic manipulation task with visual observations under the imitation learning paradigm. 
Let $\mathcal{D}_\text{pre} = \{ \tau_1, \tau_2, \ldots, \tau_n \}$ denote a large-scale pre-training dataset comprising $n$ demonstration trajectories collected from diverse robotic tasks. 
Each trajectory $\tau = (s_{1:T}, l, a_{1:T})$ consists of a natural language instruction $l$ describing the task objective, a sequence of observations $s_{1:T}$, and corresponding actions $a_{1:T}$. 
At timestep $t$, the observation $s_t = (\mathbf{I}_t^1, \ldots, \mathbf{I}_t^V, \mathbf{q}_t)$ comprises $V$ RGB camera images and the robot's proprioceptive state $\mathbf{q}_t \in \mathbb{R}^{d_q}$. The action $a_t \in \mathbb{R}^{d_a}$ represents the low-level control commands.

The central objective of VLA pre-training is to learn representations that encode not only visual-linguistic semantics but also the \emph{temporal structure of goal-reaching}---quantifying how likely a given observation-action pair is to eventually achieve the language-specified objective, i.e., \emph{goal reachability}.
While recent works such as $\pi_{0.6}^\star$~\citep{pi-star-0.6} and VLAC~\citep{VLAC} have explored value-augmented VLA training, they share fundamental bottlenecks: 
(\romannumeral1)~reliance on explicit reward or progress annotations, e.g., episode success/failure labels for $\pi^*_{0.6}$ and curated pair-wise progress labels for VLAC;
(\romannumeral2)~a separately trained value network that incurs substantial additional training cost or requires a multi-stage pipeline;
and (\romannumeral3)~the categorical targets are either discretized scalar returns or hand-engineered progress fractions, none of which directly encode the underlying goal-reaching structure of the task.

In contrast, PRTS frames goal reachability estimation as \emph{contrastive RL}~\citep{eysenbach2022contrastiverl}: the categorical structure arises naturally from the multi-task training batch---each language goal serves as a class, and the model learns to classify which goal a given $(s, a)$ is progressing toward.
This formulation eliminates the need for reward labels, return discretization, pairwise data curation, and auxiliary value networks.
Instead, the resulting goal-conditioned action-value function $Q^\pi_l(s, a)$ emerges directly from the representation geometry of the shared VLM backbone within a single forward pass.
Furthermore, recent advances demonstrate that contrastive objectives exhibit favorable scaling properties in architectures with thousands of layers \citep{1000-layer}, holding promise for the integration with large-scale VLA foundation models.
Moreover, their cross-entropy formulation aligns naturally with the discrete token prediction paradigm of VLMs, enabling unified pre-training where goal-conditioned representation learning and language modeling objectives are optimized jointly.

\subsection{Language-Conditioned Contrastive Reinforcement Learning}
\label{sec:language_crl}

Adapting standard contrastive RL to the vision-language-action setting induces a shift in how task goals are parameterized.
Standard CRL operates in state-based goal settings: for each state-action pair $(s_t, a_t)$ sampled from a trajectory, a goal state $s_g$ is drawn via geometric sampling $\Delta t \sim \text{Geom}(1-\gamma)$ from future timesteps.
Since $s_{t+\Delta t}$ is almost surely unique within a batch, this yields a \emph{single-positive} contrastive pair in which every other sample's future state serves as a distinct negative goal.
In PRTS, the goal is instead defined as the \emph{language instruction} $l$, which specifies the task-completion condition and is shared across all timesteps of a trajectory. This shared-goal structure breaks the single-positive assumption and demands a reformulation of CRL.

\textbf{Multi-Positive Problem.} Since the goal $l$ is identical across all timesteps of every trajectory belonging to the same task, a mini-batch drawn from the dataset typically contains many state-action pairs that share a single language goal. Formally, consider a mini-batch $\mathcal{B} = \{(s_j, a_j, l_j, \tau_j)\}_{j=1}^{B}$ of size $B$ sampled from the pre-training dataset $\mathcal{D}_{\rm pre}$, where each tuple contains:
\begin{itemize}
    \item $(s_j, a_j)$: the state-action pair at timestep $t_j$,
    \item $l_j$: the language instruction (goal) for the trajectory,
    \item $\tau_j$: the task identifier (indicating which task the sample belongs to).
\end{itemize}

For a given anchor sample $i \in \{1, \ldots, B\}$, we define the \emph{positive set} as the indices of all samples in the batch belonging to the same task:
\begin{align}
    \mathcal{S}(i) \triangleq \{ j \in \{1, \ldots, B\} : \tau_j = \tau_i \}.
\end{align}
That is, $\mathcal{S}(i)$ collects all batch indices that share the anchor's task, including $i$ itself. This structural property prevents direct application of standard CRL's single-positive objective (Eq.~\eqref{eq:l-crl}), necessitating a reformulation that handles multiple positives while still preserving the temporal structure of goal-reaching.

\textbf{Temporal Weighting as Geometric Sampling.} 
In standard CRL, geometric sampling $\Delta t \sim \text{Geom}(1-\gamma)$ makes the probability of sampling a future state $k$ steps ahead proportional to $\gamma^k$. 
Under this sampling, the optimal classifier satisfies
\begin{align}\label{eq:f-star}
    f^*(s, a, s_g) = \log \frac{p^{\pi}(s_{t+}=s_g \mid s,a)}{p(s_g)} + c(s,a),
\end{align}
where $p^{\pi}(s_{t+}=s_g \mid s,a) = (1-\gamma)\sum_{k=1}^{\infty} \gamma^{k-1} p(s_{t+k}=s_g \mid s_t, a_t)$ is the discounted goal occupancy measure and $c(s,a)$ is a goal-independent term that is absorbed into the learned bias of the network and leaves the relative temporal information intact. 

However, language goals cannot be sampled geometrically from a trajectory---they are fixed per task.
Fortunately, in expert demonstrations, the discounted probability of reaching the goal from timestep $t$ can be approximated by $\gamma^{T-t}$, i.e., the discount factor raised to the power of remaining timesteps. 
This observation motivates a temporal weighting scheme that emulates geometric sampling by assigning weights to multiple positive samples according to their temporal distance to the goal.
Formally, for each positive sample $j \in \mathcal{S}(i)$ at trajectory timestep $t_j$ (with length $T_j$) sharing the same language goal $l_i$, we define the temporal weight:
\begin{align}\label{eq:temporal-weight}
    q_{ij} = \frac{\gamma^{T_j - t_j}}{\sum_{j' \in \mathcal{S}(i)} \gamma^{T_{j'} - t_{j'}}}.
\end{align}
This weighting scheme assigns exponentially larger weights to states closer to task completion and mirrors the decay of the geometric distribution. 
As we prove below, optimizing with these soft targets yields representations whose inner product $\psi(l)^{\top}\phi(s,a)$ is proportional to the log-discounted occupancy, recovering the same temporal structure as standard CRL.




\textbf{Bidirectional Contrastive Objectives.} 
We optimize two complementary cross-entropy objectives that play distinct representational roles.

\textbf{(i) State-Action to Language ($s,a \to l$):} 
For each state-action anchor sampled from the batch, we compute:
\begin{align}\label{eq:sal-loss}
    \mathcal{L}^{\text{sa} \to l} = \mathbb{E}_{i \sim \mathcal{B}} \left[ - \gamma^{T_i - t_i} \log \frac{\exp(\phi_i^{\top} \psi_i)}{ \exp(\phi_i^{\top} \psi_i) +  \sum_{k \in \mathcal{B}, \, \tau_k \neq \tau_i}\exp(\phi_i^{\top} \psi_k)} \right],
\end{align}
where $\phi_i \triangleq \phi(s_i, a_i)$ and $\psi_i \triangleq \psi(l_i)$, and the denominator sums over all unique language goals in the batch (including negatives).
In this direction, each $(s_i, a_i)$ has exactly \emph{one} positive goal $l_i$ (its own task's instruction), reducing the formulation to the standard CRL problem.
Thus, to present fixed goals under the geometric sampling distribution, we weight each state-action pair by $\gamma^{T_i - t_i}$.

However, sample-wise weight $\gamma^{T-t}$ alone only rescales the loss magnitude without altering the gradient direction, pushing all states from the same task toward the same similarity with $l_i$ and failing to encode the temporal progression within the trajectory.
Consequently, this objective primarily promotes \emph{task-level discrimination}: it aligns state-action pairs with their corresponding language goals, while remaining largely agnostic to temporal distance to task completion.

\textbf{(ii) Language to State-Action ($l \to s,a$):} 
In contrast, the temporal signal is introduced in the reverse direction.
For a language anchor $l_i$ with positive set $\mathcal{S}(i) = \{j \in \mathcal{B} : \tau_j = \tau_i\}$, we optimize:
\begin{align}\label{eq:lsa-loss}
    \mathcal{L}^{l \to \text{sa}} = \mathbb{E}_{i \sim \mathcal{B}} \left[ -\sum_{j \in \mathcal{S}(i)} q_{ij} \log \frac{\exp(\psi_i^{\top} \phi_j)}{\sum_{k \in \mathcal{B}} \exp(\psi_i^{\top} \phi_k)} \right], \quad \text{with} \quad q_{ij} = \frac{\gamma^{T_j - t_j}}{\sum_{j' \in \mathcal{S}(i)} \gamma^{T_{j'} - t_{j'}}}.
\end{align}
Unlike the $sa\to l$ direction, $l_i$ now has \emph{multiple} positives.
The soft targets $q_{ij}$ require the predicted probability that the state-action pair $j$ belongs to task $i$ to scale as $\gamma^{T_j - t_j}$.
This forces the representation inner product to satisfy $\psi_i^{\top} \phi_j = (T_j - t_j)\log\gamma + C$ according to Eq.\eqref{eq:f-star}, thereby encoding the \emph{temporal distance to task completion} within the representation space.

The combination of \textbf{(i)} and \textbf{(ii)} ensures that the learned representations capture both \emph{task identity} (via $s,a\to l$) and \emph{task progress} (via $l\to s,a$). 
While $s,a\to l$ prevents interference between different tasks, $l\to s,a$ arranges states along a trajectory in order of their value (proximity to goal), creating a structured representation space where the inner product $\psi(l)^{\top}\phi(s,a)$ serves as a valid estimate of the log-discounted occupancy measure.

\textbf{Theoretical Equivalence to CRL.}
We now establish that the $l\to s,a$ objective yields representations equivalent to
standard CRL’s discounted occupancy estimation.

\setcounter{theorem}{0}
\begin{theorem}[Temporal Weighting Implements Geometric Sampling]
\label{thm:equivalence}
Let $\pi^*$ be the deterministic expert policy generating the demonstrations. For any state-action pair $(s, a)$ in the dataset belonging to task with goal $l$, the optimal representations $(\phi^*, \psi^*)$ minimizing $\mathcal{L}^{l \to \text{sa}}$ satisfy:
\begin{align}
    \psi^*(l)^\top \phi^*(s, a) = \log Q^{\pi^*}_l(s, a) + C(l),
\end{align}
where $Q^{\pi^*}_l(s, a) = p^{\pi^*}_{\gamma}(s_{t+}=s_g \mid s, a)$ is the discounted state occupancy measure (i.e., the probability of reaching goal $s_g$ under $\pi^*$ starting from $(s, a)$), and $C(l)$ is a function depending only on the goal $l$.
\end{theorem}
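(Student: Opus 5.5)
The plan has three steps: find the optimum of the weighted cross-entropy with respect to the logits, express the soft targets through $Q^{\pi^*}_l$, and absorb every constant into $C(l)$. Throughout we treat the inner products $f_{ij} = \psi_i^\top\phi_j$ as free variables, i.e., we assume the encoders are expressive enough to realise any logit table. We also work at the population level, where the batch sum $\sum_{k\in\mathcal{B}}$ is replaced by its expectation over the data distribution, as in the standard infoNCE analysis behind Eq.~\eqref{eq:f-star}.

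\textbf{Step 1: the optimal logits.} Fix a language anchor $l$. Its contribution to $\mathcal{L}^{l\to\text{sa}}$ is the cross-entropy between the target distribution $q(\cdot\mid l)$ and the softmax distribution
\begin{align}
p_\theta(j\mid l) = \frac{\exp(f_{lj})}{\sum_k \exp(f_{lk})}
\end{align}
over candidate state-action pairs $j$. Gibbs' inequality shows that this cross-entropy is minimised exactly when $p_\theta(\cdot\mid l)=q(\cdot\mid l)$. Taking logarithms, the optimum satisfies
\begin{align}
f^*_{lj} = \log q_{lj} + \log\sum_k \exp(f^*_{lk}),
\end{align}
and the second term depends only on the anchor $l$. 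Using the definition of $q$ in Eq.~\eqref{eq:temporal-weight}, this gives
\begin{align}
\psi^*(l)^\top\phi^*(s_j,a_j) = (T_j-t_j)\log\gamma + C_1(l)
\end{align}
for positives $j\in\mathcal{S}(l)$. For negatives, whose target weight is zero, the logit is pushed to $-\infty$; they do not constrain the positive logits, so we only claim the identity on the positive support.

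\textbf{Step 2: identify $\gamma^{T-t}$ with $Q^{\pi^*}_l$.} The expert $\pi^*$ is deterministic, and we take the dynamics along the demonstrations to be deterministic as well. Then from $(s_t,a_t)$ the trajectory reaches the goal state $s_g = s_T$ for the first time after exactly $k=T-t$ steps. We treat $s_g$ as absorbing, or equivalently count only the first hitting. The occupancy sum $(1-\gamma)\sum_{k\ge1}\gamma^{k-1}p(s_{t+k}=s_g\mid s_t,a_t)$ then collapses to a single term $(1-\gamma)\gamma^{T-t-1}$. By Lemma~\ref{lemma:goal_reachability_as_q}, this equals $Q^{\pi^*}_l(s_t,a_t)$. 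Hence
\begin{align}
\log Q^{\pi^*}_l(s_t,a_t) = (T-t)\log\gamma + \log\frac{1-\gamma}{\gamma}.
\end{align}
Substituting into Step 1 and absorbing $\log\frac{1-\gamma}{\gamma}$ into $C(l) = C_1(l) - \log\frac{1-\gamma}{\gamma}$ yields the claim. The normaliser $\sum_{j'}\gamma^{T_{j'}-t_{j'}}$ from $q$ is already folded into $C_1(l)$, because it depends only on the positive set of $l$.

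\textbf{Main obstacle.} The delicate point is Step 2's identification of the weight with the occupancy measure. If the goal region is not absorbing, $p(s_{t+k}=s_g)$ could be nonzero for several $k$. Demonstrations of different lengths $T_j$ for the same task also share one constant $C(l)$ only if all their goal-reaching events are treated uniformly. I would handle this by stating the modelling assumptions explicitly: a deterministic expert and deterministic dynamics, with the episode terminating at the goal. I would also note that, unlike the standard case, the additive constant here depends on $l$ rather than on $(s,a)$, because the softmax normalises over state-actions. This is exactly what the theorem asserts, so no further correction is needed.
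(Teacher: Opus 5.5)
Your proposal is correct and follows essentially the same route as the paper's proof. Both arguments match the softmax to the soft targets $q$ at the cross-entropy optimum, take logarithms to obtain $(T-t)\log\gamma$ plus an anchor-dependent constant, use deterministic demonstrations to collapse $Q^{\pi^*}_l$ to a single geometric term $\propto\gamma^{T-t}$, and absorb all constants into $C(l)$. The differences are cosmetic. The paper keeps a batch-dependent constant $C(l,\mathcal{B})$ and averages it away at the end, whereas you pass to the population level up front. You are also more explicit that negatives are driven to $-\infty$ and that deterministic dynamics and a first-hitting convention are needed. One small imprecision: an absorbing goal is not literally equivalent to first hitting, but it still gives $Q\propto\gamma^{T-t}$, so your conclusion is unaffected.
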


\begin{proof}
The proof is deferred to Appendix~\ref{app:proof}.     
\end{proof}

Theorem~\ref{thm:equivalence} reveals that while the weights $q_{ij}$ are computed using temporal distances $(T-t)$, the learned representations do not merely encode `how many steps remain.' Instead, they encode the \emph{logarithm of the discounted occupancy measure} $\log Q^{\pi}_l(s,a)$, which is the fundamental quantity in goal-conditioned RL. Finally, we emphasize that although the derivation of the whole section is presented using single-step actions, the formulation naturally extends to action chunks used in VLAs, without changing the objectives or theoretical results.



\subsection{PRTS Architecture}
\label{sec:prts_arch}

\begin{figure}[t]
    \centering
    \includegraphics[width=1.0\linewidth]{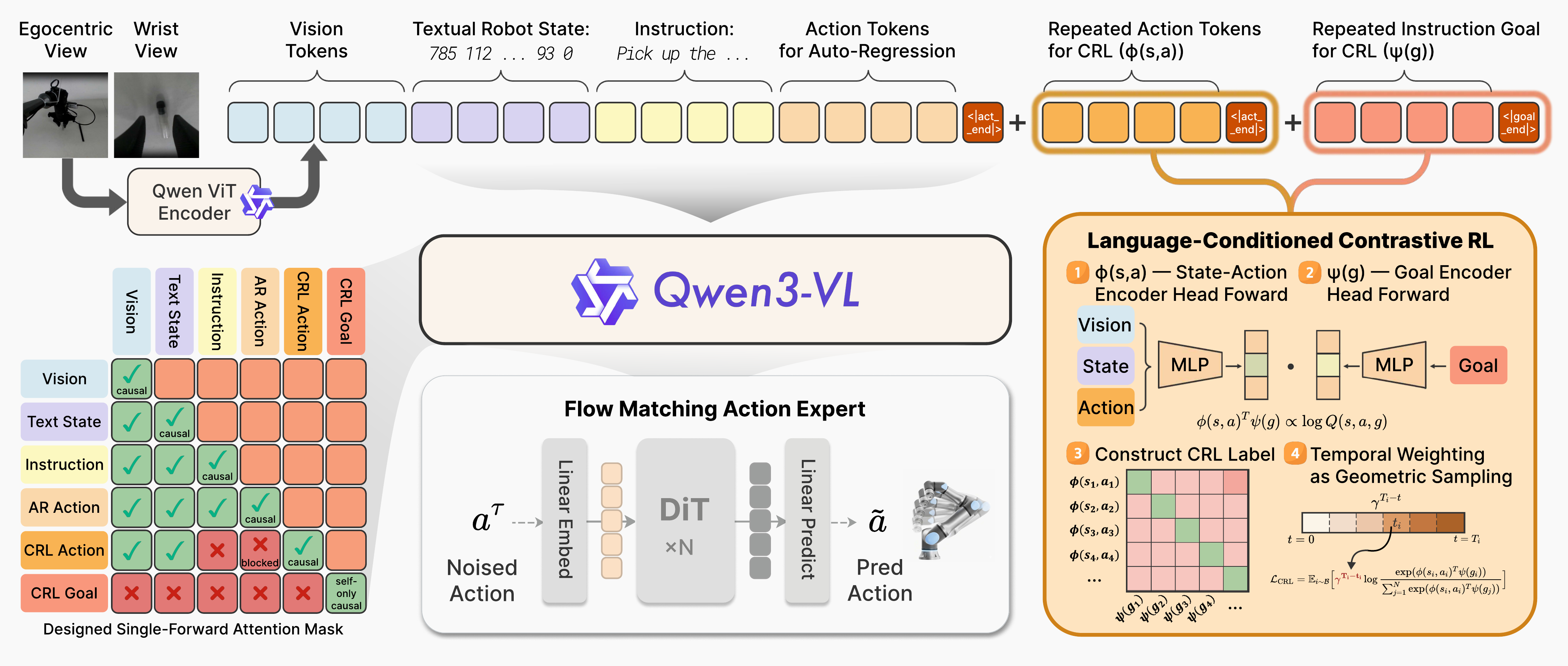}
    \caption{\textbf{Overview of PRTS.}
    PRTS is built on Qwen3-VL-4B-Instruct and involves two training stages.
    (\romannumeral1) In the \textbf{pre-training} stage, the backbone consumes a unified token sequence of multi-view visual tokens (Qwen ViT), textual proprioceptive-state tokens, language-instruction tokens, and FAST-tokenized AR action tokens, followed by two auxiliary blocks \texttt{<CRL\_action>} and \texttt{<CRL\_goal>}, in which the last learnable tokens \texttt{<|action\_end|>} and \texttt{<|goal\_end|>} are passed through lightweight MLP heads (right) to produce the state-action representation $\phi(s_t,\mathbf{a}_t)$ and the goal representation $\psi(l)$.
    A \emph{role-aware causal mask} (bottom-left) prevents information leakage across the five token roles: \texttt{<CRL\_action>} tokens attend only to vision and proprioception; \texttt{<CRL\_goal>} tokens attend only within their own block; AR action tokens keep standard causal attention.
    All three outputs---discrete-action logits, $\phi(s_t,\mathbf{a}_t)$, and $\psi(l)$---are produced in a single forward pass and trained jointly under the bidirectional contrastive (InfoNCE) losses with temporal weighting and the next-token cross-entropy loss, so that $\phi(s_t,\mathbf{a}_t)^{\top}\psi(l) \approx \log Q^{\pi}_l(s_t,\mathbf{a}_t)$.
    The action expert is absent at this stage.
    (\romannumeral2) In the \textbf{post-training} stage, a freshly initialized DiT-based flow-matching action head is attached and fine-tuned via $\mathcal{L}_{\text{FM}}$ to produce continuous action chunks.}
    \label{fig:architecture}
\end{figure}

We instantiate PRTS upon the Qwen3-VL backbone \citep{Qwen3-vl}, extending it to support unified pre-training for both discrete action prediction and contrastive representation learning. As illustrated in Fig.~\ref{fig:architecture}, the input sequence comprises visual tokens (encoded via a Qwen ViT Encoder from egocentric and wrist views $\mathbf{I}^1_t, \dots, \mathbf{I}^V_t$), robot proprioception tokens for $\mathbf{q}_t$, language instruction tokens for $l$, and auto-regressive (AR) action tokens obtained by FAST~\citep{pertsch2025fast} tokenization of action chunk $\mathbf{a}_t = a_{t:t+H}$.
Here, we represent proprioceptive states as text after discretization like \citep{black2025pi05, driess2025ki}.
Crucially, we append two auxiliary token blocks at the tail of the sequence: \texttt{<CRL\_action>} and \texttt{<CRL\_goal>}. 
The \texttt{<CRL\_action>} block consists of repeated action tokens followed by a learnable token \texttt{<|action\_end|>}, whose hidden state encodes $\phi(s_t,\mathbf{a}_t)$. 
Similarly, the \texttt{<CRL\_goal>} block consists of repeated instruction tokens ended with a learnable token \texttt{<|goal\_end|>}, whose hidden state encodes $\psi(l)$. 
These auxiliary tokens extract representations for contrastive learning within the same forward pass used for action generation.

\textbf{Role-Aware Causal Mask for Disentangled Representations.}
It is non-trivial to ensure that $\phi(s_t,\mathbf{a}_t)$ and $\psi(l)$ correctly encode the desired modalities without information leakage while preserving the efficiency of standard next-token prediction training.
To address this, we introduce a \emph{role-aware} causal attention mask, depicted in the bottom-left of Fig.~\ref{fig:architecture}, which assigns each token one of five roles: vision/state, instruction, AR action, \texttt{CRL\_action}, and \texttt{CRL\_goal}.
Based on these roles, we enforce three structured attention constraints:

(\romannumeral1)~{AR action tokens} use standard causal attention over all preceding visual, proprioception, and instruction tokens.
This preserves the original autoregressive action token prediction, ensuring that the behavior cloning loss on FAST-tokenized actions remains identical to a non-CRL run.

(\romannumeral2)~\texttt{CRL\_action} tokens $\phi(s_t,\mathbf{a}_t)$ are masked to attend \emph{only} to vision tokens, proprioception tokens, and the CRL action tokens themselves. They are blocked from attending to language instructions and AR action tokens. This ensures that $\phi(s,a)$ extracts a pure state-action representation independent of the language goal, which is essential for the validity of the contrastive similarity $\phi^\top\psi$.

(\romannumeral3)~\texttt{CRL\_goal} tokens $\psi(l)$ employ self-only causal attention restricted to the \texttt{<CRL\_goal>} token block, with all other tokens masked out. This forces $\psi(l)$ to be a compact, task-specific encoding of the language instruction alone, serving as the goal anchor in the contrastive objective.

This design enables computing both action predictions and contrastive representations ${\phi}(s_t, \mathbf{a}_t), {\psi}(l)$ within a single forward pass, preserving training efficiency.

\textbf{Practical Implementations.}
We implement the structurally sparse role-aware mask on top of the latest FlashAttention~\citep{zadouri2026flashattention4} combined with a custom CuTe kernel, which encodes the five-role rules at the block level so that only the non-masked blocks are computed. This brings the joint forward pass to the same throughput as a pure-causal next-token-prediction behavior cloning run with FlashAttention-2/3 as backend~\citep{dao2023flashattention2, shah2024flashattention3}.
The design further remains compatible with sequence packing trick which we also involve to accelerate our large-scale pre-training.
PRTS therefore produces the discrete-action logits, $\{\phi_i\}$, and $\{\psi_i\}$ in a single forward pass whose per-step cost closely matches that of a pure-BC run on the same Qwen3-VL backbone (quantified in Sec.~\ref{sec:exp}), a property we rely on to scale contrastive-RL pre-training to a over 167B token corpus.

\textbf{Action Expert.}
For real-time continuous control, we adopt a flow-matching action expert~\citep{lipman2023flow, liu2023flow, lipman2024flowguide} based on a Diffusion Transformer (DiT)~\citep{peebles2023dit}, following recent VLA paradigms~\citep{black2024pi0, black2025pi05}. 
It outputs the action chunk $\tilde{\mathbf{a}}_t$ with a chunk horizon $H$ over 5 denoising steps, conditioned on the VLM backbone hidden states, enabling high-frequency continuous action generation while the backbone focuses on high-level semantic reasoning.
The action expert $f_\theta$ is trained via conditional flow matching~\citep{lipman2024flowguide}:
\begin{align}\label{eq:fm-loss}
    \mathcal{L}_{\text{FM}} = 
    \mathbb{E}_{\tau \sim [0,1],\,\epsilon \sim \mathcal{N}(0, I)}
    \left\|
    f_\theta\!\left(\mathbf{a}^{\tau}_{t},\, s_t,\, l\right)
    - (\mathbf{a}_{t} - \epsilon)
    \right\|_2^2,
\end{align}
where $\mathbf{a}^{\tau}_{t} = \tau\, \mathbf{a}_{t} + (1-\tau)\,\epsilon$ is the noised action chunk, and $\tau\in [0,1]$ denotes the flow-matching time index.

\textbf{Pretraining and Post-training Objectives.}
PRTS adopts a two-stage training scheme that cleanly separates representation learning from continuous-action adaptation.
The VLM backbone is shaped during pre-training, and the flow-matching action expert is attached and adapted during post-training. Gradients flow through the VLM backbone in both stages.

In the \emph{pre-training} stage,
the backbone is trained jointly on the bidirectional contrastive losses, and the standard next-token prediction cross-entropy loss, which encompasses the BC loss on AR action tokens and the auxiliary cross-entropy losses for robot-QA and sub-task prediction.
The action expert is not included at this stage.
At each iteration, a mini-batch $\mathcal{B}$ yields the temporal weights $q_{ij}$ of Eq.~\eqref{eq:temporal-weight}, and the representations $\phi(s_{t_j}, \mathbf{a}_{t_j})$ and $\psi(l_i)$ are respectively produced by lightweight MLP heads applied to the final hidden states in the \texttt{<CRL\_action>} and \texttt{<CRL\_goal>} blocks, followed by $\ell_2$-normalization and scaling by a learnable temperature $\tau_{\text{crl}}$. The resulting dot products $\phi_j^{\top}\psi_i$ are then used to compute the bidirectional contrastive objectives: the state-action to language loss $\mathcal{L}^{\text{sa} \to l}$ (Eq.~\eqref{eq:sal-loss}) and the language to state-action loss $\mathcal{L}^{l \to \text{sa}}$ (Eq.~\eqref{eq:lsa-loss}).
As demonstrated by \citet{oord2018cpc}, the performance of the InfoNCE objective relies on the size of the negative pool.
To maximize this pool without exceeding memory constraints, we employ a cross-device contrastive strategy akin to CLIP~\citep{radford2021clip}. Specifically, we shard the similarity computation across GPUs, allowing each device to aggregate negative representations globally while only computing the gradient for its local batch.
The pre-training objective is
\begin{align}\label{eq:loss-pre}
    \mathcal{L}_{\text{pre-train}} = \mathcal{L}_{\text{BC}} + \mathcal{L}_{\text{RobotQA}} + \mathcal{L}_{\text{Subtask}} + \lambda_{\text{crl}} \left( \mathcal{L}^{\text{sa} \to l} + \mathcal{L}^{l \to \text{sa}} \right),
\end{align}
where $\mathcal{L}_{\text{BC}}$ is the cross-entropy loss on AR action tokens and $\mathcal{L}_{\text{RobotQA}}, \mathcal{L}_{\text{Subtask}}$ are the auxiliary robot-QA and sub-task prediction losses. This unified objective enables the VLM backbone to learn goal-conditioned representations that simultaneously support (\romannumeral1)~semantic grounding, (\romannumeral2)~discrete and continuous action generation over action chunks, and (\romannumeral3)~dense goal-reachability estimation expressed via the inner product of the contrastive embeddings.

In the \emph{post-training} stage,
we then attach the flow-matching action expert and fine-tune on downstream demonstration data using Eq.~\eqref{eq:fm-loss},
\begin{align}\label{eq:loss-post}
    \mathcal{L}_{\text{post-train}} = \mathcal{L}_{\text{FM}}.
\end{align}
We drop the BC and auxiliary terms in this stage.
The representations after pre-training already encode task identity and goal reachability, so post-training can focus entirely on continuous-action regression.

\textbf{Benefits for Representation and Value Extraction.}
In Eq.~\eqref{eq:loss-pre}, while $\mathcal{L}_{\text{BC}}$ focuses on temporally local, low-level action accuracy, the CRL objectives encourage the model to learn a \emph{globally consistent} representation space whose geometry encodes the temporal structure of goal-reaching. 
Specifically, the contrastive signal aligns visual observations, robot states, and language instructions into a unified metric space in which distances correspond to task progress. 
This mitigates overfitting to spurious correlations in demonstration data and promotes generalization to long-horizon and fine-grained manipulation tasks.

Upon training completion, the model inherently provides a dense goal-conditioned action-value without requiring a separately trained value network. 
The learned representations satisfy:
\begin{align}
    Q^{\pi}_l(s, \mathbf{a}) \propto \exp\left( \phi(s, \mathbf{a})^\top \psi(l) \right),
\end{align}
or equivalently, $\log Q^{\pi}_l(s, \mathbf{a}) \approx \phi(s, \mathbf{a})^\top \psi(l) + \text{const}$.
As established in Theorem~\ref{thm:equivalence}, this inner product estimates the log-probability of reaching the language goal under the discounted state-occupancy measure, yielding a scalar signal that quantifies progress toward satisfying the instruction.
Notably, this dense $Q^{\pi}_l(s,\mathbf{a})$ emerges as a by-product of the same backbone used for action generation, suggesting that value awareness is implicitly integrated into the policy during pre-training and embedded directly into the learned representations.






%% file: sec/4_dataset.tex
\section{Datasets}\label{sec:dataset_overview}

\begin{figure}[t]
    \centering
    \includegraphics[width=1.0\linewidth]{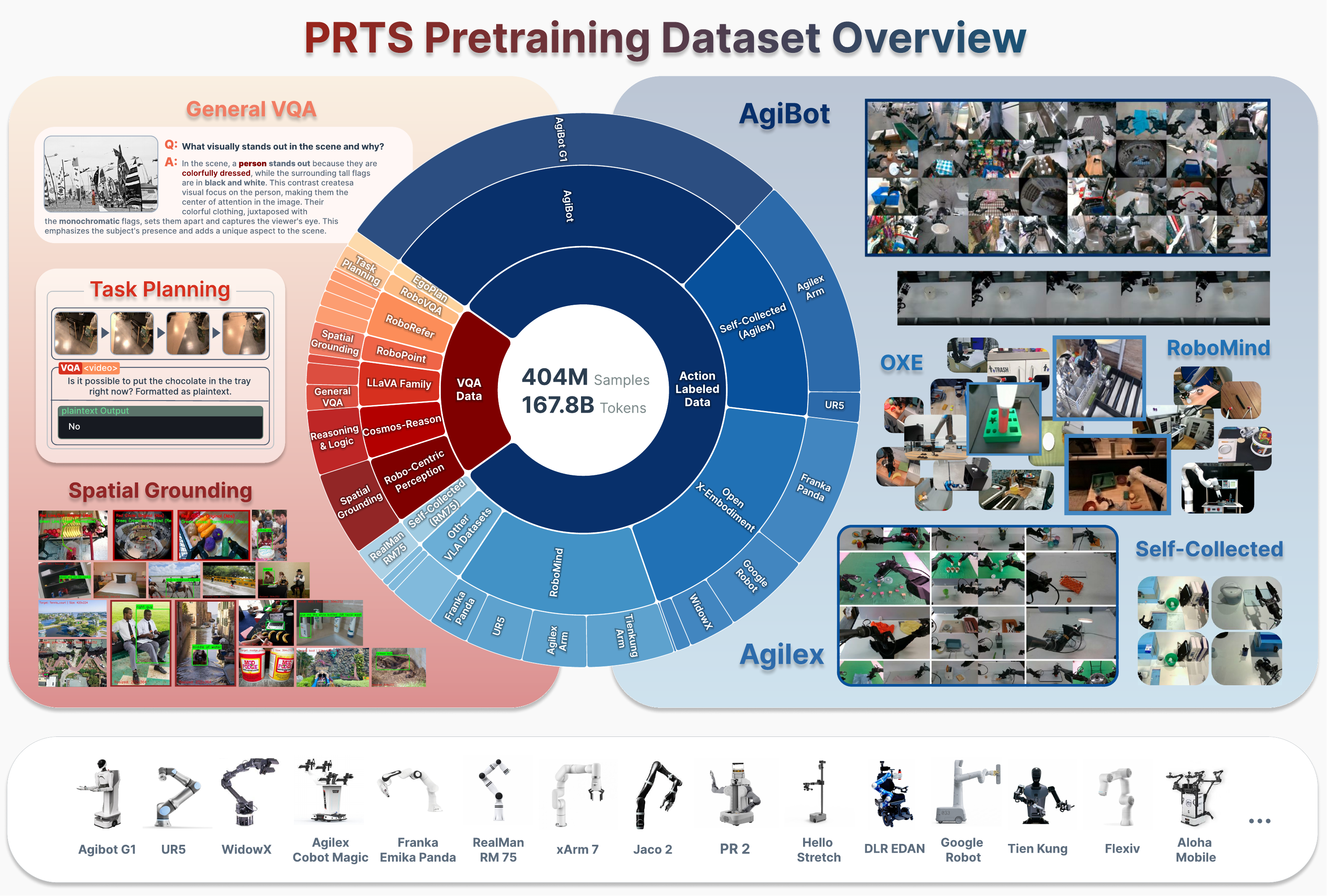}
    \caption{\textbf{Pre-training dataset overview of PRTS.} Our pre-training dataset is partitioned into two primary domains: \emph{Action-Labeled Data} provides physical execution priors for cross-embodiment manipulation, while \emph{Visual-Reasoning Data} provides complementary supervision for spatial grounding, affordance perception, and task planning.
    The area of each segment is scaled proportionally to the square root of its total tokens to highlight the rich, structural diversity of our data sources.}
    \label{fig:dataset_overview}
\end{figure}

To build a value-aware foundation VLA capable of both high-level reasoning and low-level control, we construct a large-scale multi-modal pre-training dataset of \textbf{404\,M samples} totaling \textbf{167.8\,B tokens}. As illustrated in Fig.~\ref{fig:dataset_overview}, the dataset is partitioned into two complementary domains: \emph{Action-Labeled Data} for cross-embodiment physical execution priors, and \emph{Visual-Reasoning Data} for semantic and spatial grounding.

\subsection{Action-Labeled Data}
To translate visual understanding into robust physical execution, we build an extensive collection of embodied action-labeled trajectories. Characterized by heterogeneous hardware and long-horizon tasks, the action labeled subset is primarily aggregated from four sources:

\begin{itemize}
    \item \textbf{AgiBotWorld}~\citep{contributors2024agibotworldrepo}. Comprising over one million trajectories across 217 tasks in five deployment scenarios, AgiBotWorld covers contact-rich manipulation, long-horizon planning, and dual-arm collaboration equipped with end-effectors from grippers to dexterous hands. Besides, it provides detailed fine-grained sub-task annotations.
    
    \item \textbf{RoboMind}~\citep{wu2024robomind}.
    As a large-scale, unified teleoperation data dataset, RoboMind introduces a vast array of demonstration trajectories across 479 diverse tasks involving 96 object classes. It enriches the physical priors by providing standardized, high-quality trajectories across diverse scenes and complementing AgiBotWorld in task breadth.
    
    \item \textbf{Open X-Embodiment}~\citep{open_x_embodiment_rt_x_2023}. Open X-Embodiment Dataset \citep{open_x_embodiment_rt_x_2023} encompasses 22 robotic embodiments across distinct laboratory and real-world environments, which is essential for endowing the model with strong cross-embodiment generalization capabilities.
    
    \item \textbf{PRTS Self-Collected Dataset.} Collected on dual-arm RealMan and Agilex platform, our dataset covers a wide range of daily manipulation tasks and complements the dataset with highly complex, multi-scene scenarios that demand precise, contact-rich operations, enhancing the diversity of the training distribution.
\end{itemize}

\subsection{Visual-Reasoning Data}
The visual-reasoning subset supplies grounding, planning, and general vision--language competence that are often under-represented in action-labeled demonstrations.
We organize this reasoning corpus into three hierarchical capability tiers:

\textbf{Spatial Grounding and Affordance Perception}. To safely and accurately interact with the physical world, the model needs to achieve pixel-level localization and understand operable object regions. We utilize \textbf{RefCOCO} \citep{kazemzadeh-etal-2014-referitgame} for fundamental referring expression comprehension, pairing natural language descriptions with precise 2D bounding boxes.
To further enhance fine-grained localization, we incorporate \textbf{Pixmo-Point} \citep{deitke2024molmopixmoopenweights}, which employs a Point-QA paradigm, forcing the model to output precise 2D pixel coordinates rather than abstract text. This precise localization enables fine-grained visual grounding, such as pointing-based counting and visual explanation.

To bridge the domain gap to embodied workspaces, we incorporate \textbf{RoboPoint} \citep{yuan2024robopoint} for spatial-relation grounding alongside standard object detection. 
We further include \textbf{RoboRefIt} \citep{lu2023vl} to resolve referential ambiguity in cluttered environments, encouraging the model to track correct grasping targets under complex relative spatial relations. 
We also incorporate \textbf{RefSpatial} \citep{zhou2025roborefer} to improve 3D scene understanding and enable dynamic reasoning about instruction-specified locations for interaction. 
Moreover, dexterous manipulation inherently requires understanding functional object parts. 
To this end, we incorporate \textbf{RoboAfford} \citep{inproceedings} to endow the model with physical affordance perception, enabling it to distinguish operable regions (e.g., the handle of a knife versus the blade).

\textbf{Task Planning and High-level Reasoning}.
Long-horizon execution requires temporal foresight and physical common sense. To this end, we inject \textbf{Cosmos-Reason1} \citep{nvidia2025cosmosreason1physicalcommonsense} to provide dense reasoning chains (Chain-of-Thought) regarding physical causality and scene dynamics. For long-horizon execution, we leverage \textbf{EgoPlanIT} \citep{chen2024egoplanbenchbenchmarkingmultimodallarge} to train the model in decomposing abstract human instructions into sequential, actionable sub-goals from an egocentric perspective. Additionally, \textbf{RoboVQA} \citep{10610216} is included to enhance step-by-step temporal reasoning and context awareness over continuous embodied video streams.

\textbf{General Semantic Understanding}.
To retain general vision--language competence during large-scale embodied pre-training, we include a curated, high-quality subset of \textbf{LLaVA-Instruct} \citep{liu2024llavanext}. This subset acts as a semantic regularizer, preserving robust zero-shot language-following abilities, visual world knowledge, and conversational fluidity, ensuring the VLA backbone model remains a highly capable conversational agent.

%% file: sec/5_related_work.tex
\section{Related Work}



\textbf{Visual-Language-Action Models. } 
Early VLAs such as RT-1~\citep{brohan2022rt1}, RT-2~\citep{zitkovich2023rt2}, and OpenVLA~\citep{kim2024openvla} formulate policy learning as next-token prediction over discretized action tokens. 
While effective for action prediction, this objective provides only weak, indirect supervision for learning rich visual-semantic representations beyond behavior cloning.
Therefore, current VLA training pipeline typically consists of large-scale pre-training over the mixture of action-labeled data and visual-reasoning data~\citep{black2025pi05,pi-star-0.6, bjorck2025gr00tn1, zhai2025wallx}, followed by task- or embodiment-specific post-training~\citep{kim2025openvla-oft, zhang2025ate, yang2025taco}.
PRTS specifically tackles the representation bottleneck during the pre-training phase.
To enrich the pre-training representations and further improve action generation, recent systems incorporate auxiliary VLM-side training objectives: $\pi_{0.5}$~\citep{black2025pi05} and GR00T~N1~\citep{bjorck2025gr00tn1} add VQA, sub-task prediction, and discrete-action classification to the backbone, whereas RoboBrain~\citep{robobrain}, RoboVLM~\citep{robovlm}, and CoT-VLA~\citep{Zhao2025CoTVLAVC} inject chain-of-thought reasoning and task decomposition.
However, while these auxiliary objectives improve static visual-semantic grounding, they fail to encode the \emph{temporal structure of goal-reaching}---how a task unfolds along a trajectory toward its language goal.
Most closely related to our approach are value-augmented VLAs such as $\pi^*_{0.6}$~\citep{pi-star-0.6}, VLAC~\citep{VLAC}, which attach separate value networks to affect action label supervision. 
A fundamental limitation of these approaches is that their value modules are architecturally decoupled from the VLM backbone.
Consequently, gradients from value learning cannot directly shape the shared vision-language representation. 
In contrast, PRTS integrates contrastive reinforcement learning into VLM pre-training itself, allowing goal-conditioned value supervision to co-train the same representation used for both semantic reasoning and action generation. 
Moreover, existing value-augmented methods also rely on Monte Carlo (MC) returns or temporal-difference (TD) bootstrapping. 
MC estimation suffers from high variance in long-horizon, sparse-reward tasks, while TD learning accumulates function-approximation error~\citep{kumar2019stabilizing, kumar2020cql}. 
PRTS instead formulates value learning as a robust contrastive classification problem with temporally weighted positives, enabling stable and efficient learning of temporal-aware, goal-conditioned representations.


\textbf{Contrastive Reinforcement Learning.}
Goal-conditioned reinforcement learning (GCRL) aims to learn policies that reach arbitrary target states. Classical approaches such as Hindsight Experience Replay~\citep{andrychowicz2017her} and C-Learning~\citep{eysenbach2021clearning} address the sparse-reward challenge of GCRL through goal relabeling or classifier-based value estimation, yet still inherit the instability of TD bootstrapping under function approximation~\citep{sutton2018reinforcement, kumar2019stabilizing}.
Contrastive RL (CRL)~\citep{eysenbach2022contrastiverl} closes this gap by reformulating value learning as an InfoNCE classification problem~\citep{oord2018cpc}: the goal-conditioned value function under geometric discounting is recoverable purely by cross-entropy classification, without Bellman regression. \citet{zheng2024stabilizing} further develops offline-data techniques that make CRL practical for robotic goal reaching.
The central practical point is that CRL replaces the regression-based TD/MC loss with a cross-entropy classification loss, a swap that matters for scale. \citet{1000-layer} shows that CRL trains stably over one thousand layers, a depth at which TD-regression losses provide no meaningful learning signal, and classification-based value estimation has been shown to scale better than regression across deep RL more broadly~\citep{stop-regression}.
These two properties make CRL a particularly natural fit for VLA pre-training. {(i)} VLMs are natively optimized with a cross-entropy token-prediction objective, so CRL's classification loss composes with the backbone without introducing a regression head or auxiliary optimization regime. {(ii)} A VLA is itself a goal-conditioned RL problem: the language instruction specifies the goal, and the policy must reach states that satisfy it. PRTS builds on this alignment, integrating contrastive goal-reaching objectives directly into VLM pre-training so that semantic reasoning and value-aware planning are learned within a single cross-entropy framework.

%% file: sec/6_exp.tex
\section{Experiments}\label{sec:exp}
PRTS is designed to perform robustly and generalize effectively across a wide range of embodied tasks including long-horizon tasks and settings with distribution shifts.
To this end, we evaluate PRTS through a comprehensive set of experiments spanning simulation and real-world deployment, robustness to distribution shifts, instruction-following generalization, value analysis, and pre-training efficiency.
Through extensive experiments, we aim to answer the following key questions:

\textsc{\bf Q1:}
\textbf{Overall Performance.}
How does PRTS compare with state-of-the-art VLAs in both simulation and real-world settings? (Sec.~\ref{sec:benchmark})

\textsc{\bf Q2:}
\textbf{Effect of Contrastive RL.} 
How does incorporating contrastive RL during pre-training contribute to the final performance of PRTS when the post-training recipe is held fixed? (Sec.~\ref{sec:ablation})

\textsc{\bf Q3:}
\textbf{Generalization Capability.} 
How well does PRTS generalize under distribution shifts, including variations in spatial position, lighting conditions, object appearance, and even novel instructions, i.e., zero-shot instruction-following? (Sec.~\ref{sec:generalization})

\textsc{\bf Q4:}
\textbf{Robustness and Recovery under Human Interventions.} 
Can PRTS maintain robust performance and recover from failures under frequent human interventions during execution? (Sec.~\ref{sec:robustness})

\textsc{\bf Q5:}
\textbf{Value Representation.} 
Do the learned representations faithfully encode goal-conditioned action values? (Sec.~\ref{sec:value})

\textsc{\bf Q6:}
\textbf{Pre-Training Efficiency.}
How efficient are our optimized pre-training infrastructure and framework for scaling contrastive RL? (Sec.~\ref{sec:efficiency})


\subsection{Experimental Setup}\label{sec:setup}

\textbf{Pre-training recipe.}
We initialize PRTS from Qwen3-VL-4B-Instruct~\citep{Qwen3-vl} and pre-train it on the corpus described in Sec.~\ref{sec:dataset_overview} using the objective in Eq.~\eqref{eq:loss-pre}. We set the contrastive-RL coefficient to $\lambda_{\text{crl}}=1.0$ and the temporal discount in the contrastive weights to $\gamma=0.995$.
To improve training efficiency, we pack multiple short sequences into a single sequence of length $4096$, reducing wasted computation on padding tokens.
Pre-training runs for one epoch on $64\;\times$ H100 GPUs with a global batch size of $256$ packed sequences, totaling approximately $220$K gradient steps and finishing in about one week.
Finally, thanks to the sharded contrastive similarity computation strategy, we obtain roughly $2$K in-batch negatives for the CRL objective, which is critical for scaling contrastive RL during pre-training.

\textbf{Post-training recipe.}
For each downstream benchmark, we attach a randomly initialized flow-matching DiT action expert to the pre-trained backbone and optimize the post-training objective in Eq.~\eqref{eq:fm-loss}.
The action expert has $675$M parameters and uses $5$ flow-matching denoising steps at inference.
Table~\ref{tab:posttrain_recipes} summarizes the post-training schedules.
Across both simulation and real-robot evaluations, PRTS uses a post-training compute budget that is no larger than, and in most cases substantially smaller than, those used by strong representative VLAs such as $\pi_{0.5}$.
For example, comparisons on LIBERO simulation benchmark use only $1/8$ of the post-training compute used by $\pi_{0.5}$.
This design makes performance differences easier to attribute to how well the pre-trained VLA acts as a general visuomotor prior rather than to a larger downstream fine-tuning budget.

\begingroup
\captionsetup{justification=centering, singlelinecheck=true}
\begin{table}[htbp]
\centering
\renewcommand{\arraystretch}{1.15}
\begin{threeparttable}
\resizebox{\textwidth}{!}{
\begin{tabular}{l c c c}
\toprule
\textbf{Benchmark} & \textbf{Global batch size} & \textbf{Gradient steps} & \textbf{Action chunk $H$}\\
\midrule
LIBERO  & $32$    & $30$K  & $20$ \\
SimplerEnv (WidowX)                 & $1024$ & $20$K & $16$ \\
RealMan Dual-Arm Robot Platform (all-task fine-tuning) & $64$  & $100$K & $20$ \\
Flexiv Robot Platform (single-task fine-tuning) & $32$ & $40$K &  $20$ \\
\bottomrule
\end{tabular}
}
\caption{\textbf{Post-training recipes per benchmark.}}
\label{tab:posttrain_recipes}
\end{threeparttable}
\end{table}
\endgroup

\textbf{Real-world robot platforms.}
Our real-world evaluation uses the two platforms shown in Fig.~\ref{fig:hardware_overview}: a Flexiv platform for three single-arm manipulation tasks and a dual-arm {RealMan} platform for eleven bimanual manipulation tasks.
The RealMan platform uses a stationary $14$-DoF bimanual configuration, with each arm equipped with a parallel-jaw gripper.
It observes the workspace through three RGB cameras: one egocentric head camera and one wrist camera per arm.
The Flexiv platform, shown in the right panel of Fig.~\ref{fig:hardware_overview}, provides a complementary single-arm setting.

\begin{figure}[htbp]
    \centering
    \includegraphics[width=\linewidth]{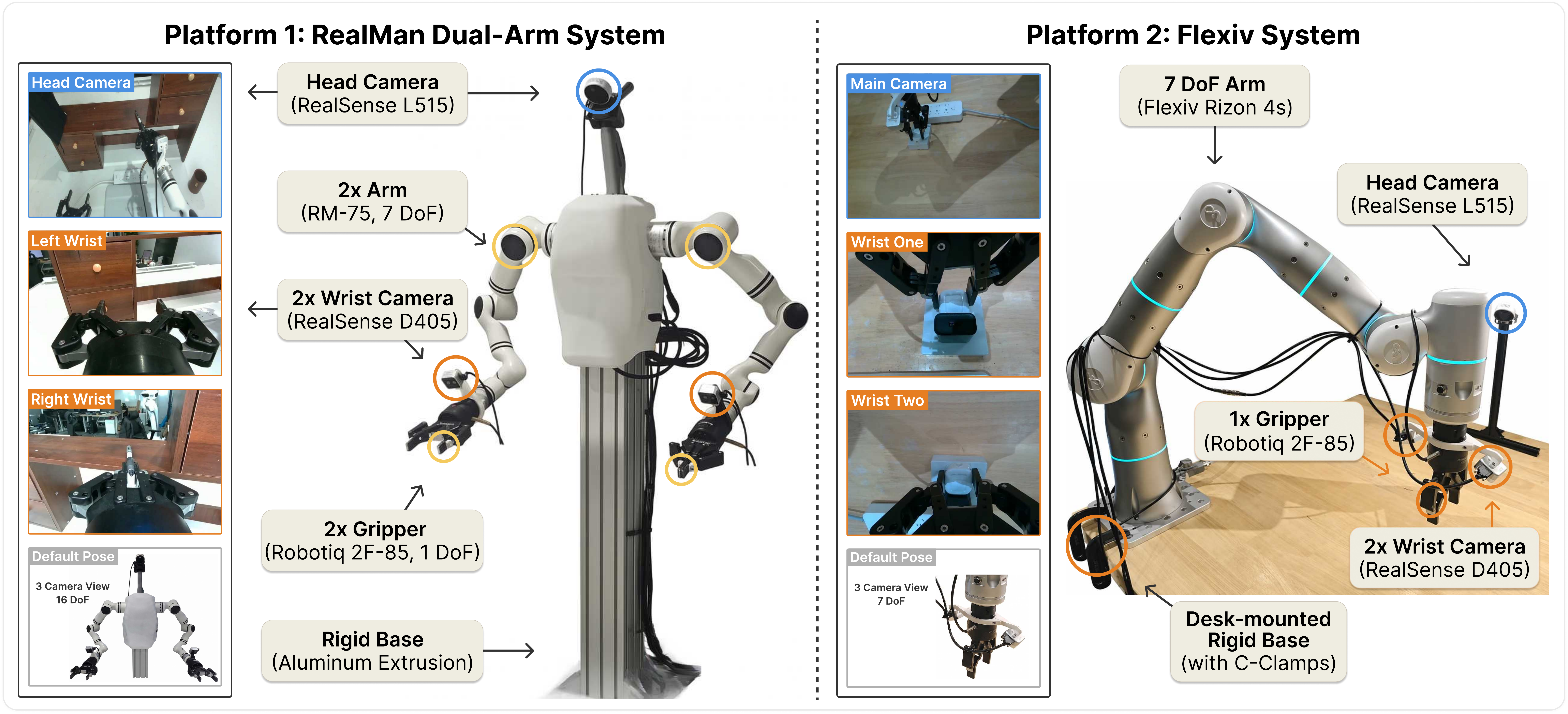}
    \caption{\textbf{Overview of real-world robot platforms.} Left: dual-arm RealMan platform with two parallel-jaw grippers, one egocentric head camera, and two wrist cameras (one per arm). Right: Flexiv single-arm platform equipped with a parallel-jaw gripper, a third-view head camera, and two wrist cameras.}
    \label{fig:hardware_overview}
\end{figure}

\textbf{Benchmarks.}
We evaluate PRTS on five benchmarks:
\begin{itemize}
    \item \textbf{LIBERO}~\citep{liu2023libero}, the four standard suites (\emph{Spatial}, \emph{Object}, \emph{Goal}, \emph{Long}) for measuring in-distribution visuomotor accuracy.
    \item \textbf{LIBERO-Plus}~\citep{fei2025liberoplus}, a robustness-oriented extension with $10{,}030$ test tasks spanning seven perturbation dimensions: object layout, camera viewpoint, robot initial state, language instruction, lighting, background texture, and sensor noise.
    \item \textbf{LIBERO-Pro}~\citep{zhou2025liberopro}, a perturbation-based extension of LIBERO that tests whether a policy follows the task specification rather than memorizing the original scene, covering five generalization dimensions: \emph{Object}, \emph{Position}, \emph{Semantic}, \emph{Task}, and \emph{Environment}.
    \item \textbf{SimplerEnv (WidowX)}~\citep{simpler}, the BridgeData V2/WidowX track of SIMPLER, which evaluates policies trained on real-world data in simulation through four rigid-object pick-place-stack tasks.
    \item \textbf{Real-world evaluation}, three fine-grained manipulation tasks on the Flexiv platform and eleven manipulation tasks on the dual-arm RealMan platform, together with deliberate perturbations that vary lighting, initial object position, object identity, and task instruction at deployment time. We visualize all these task in Figs.~\ref{fig:realman_task_details} and~\ref{fig:flexiv_task_details} and provide detailed descriptions in Appendix~\ref{app:task_details}.
\end{itemize}

\textbf{Baselines.}
The baseline set differs across benchmarks because we use strong representative VLAs reported under each protocol, whereas our real-world experiments directly compare against $\pi_0$~\citep{black2024pi0} and $\pi_{0.5}$~\citep{black2025pi05} under the same evaluation protocol.
Also, we particularly annotate the post-training budget of those competitive VLA baselines in gray next to the model name when available, to make the compute budget explicit.
This annotation aims to clearly highlight how rich a general visuomotor prior PRTS provides compared to existing VLAs, particularly under the same or even smaller post-training budget.

\textbf{Evaluation protocol.}
All simulated benchmarks report task success rate (SR) under the standard protocol of the corresponding benchmark.
For real-world evaluation, each task cell is evaluated over $20$ physical trials.
On the Flexiv platform, a trial is marked as failed if the robot does not succeed within five grasp attempts or within two minutes.
On the RealMan platform, grasping tasks similarly allow up to five feasible re-grasp attempts and are marked as failed if they exceed a three-minute time limit.

\subsection{Overall Performance}
\label{sec:benchmark}
We first evaluate PRTS on four simulation benchmarks that probe complementary aspects of policy competence: in-distribution visuomotor execution on LIBERO, robustness to distribution shifts on LIBERO-Plus and LIBERO-Pro, instruction-grounded generalization on LIBERO-Pro, and cross-embodiment adaptability to the WidowX robot in SimplerEnv.
For LIBERO, PRTS is post-trained with global batch size $32$ for $30$K gradient steps.
The resulting checkpoint is then evaluated \emph{zero-shot} on LIBERO-Plus and LIBERO-Pro, without any additional training on those benchmark-specific perturbation datasets.
For SimplerEnv, PRTS is post-trained with global batch size $1024$ for $20$K gradient steps, same as the compute budget of GR00T-N1.5~\citep{bjorck2025gr00tn1}, the strongest prior results on that benchmark.
The baseline set differs between benchmarks and is listed in each table.

\subsubsection{LIBERO}
Table~\ref{tab:libero_benchmark} compares PRTS against strong VLA baselines on the four standard LIBERO suites.
With only global batch size $32$ and $30$K post-training steps, PRTS reaches $98.4\%$ average SR and matches the state-of-the-art performance of ABot-M0 while using much less post-training compute than several strong baselines. In particular, $\pi_{0.5}$ uses an $8\times$ larger batch (bs=$256$, $30$K steps) yet reaches $96.9\%$, while the strongest ABot-M0 setting uses both a larger batch and more steps (bs=$64$, $40$K steps) to reach $98.6\%$. To separate representation quality from post-training compute, we also train ABot-M0 under the same bs=$32$ and $30$K gradient steps budget as PRTS. Under this matched budget, ABot-M0 reaches $97.9\%$ average SR, $0.5$ points below PRTS, and suffers a dramatic drop from $96.6\%$ to $94.8\%$ on the \textit{Long} suite.
PRTS, in contrast, reaches $96.6\%$ on \textit{Long}, matching the best reported result with significantly smaller post-training cost.
It suggests that the goal-reachability-aware representation learned through contrastive RL pre-training is especially critical for long-horizon manipulation tasks.
Overall, the controlled-budget comparison indicates that PRTS benefits from a stronger pre-trained visuomotor prior, rather than relying on heavier downstream optimization to recover missing temporal reasoning ability.

\begin{table}[htbp]
\centering
\renewcommand{\arraystretch}{1.0} 
\begin{threeparttable}
\begin{tabular}{l c c c c c}
\toprule
\textbf{Method} &  \textbf{Spatial} & \textbf{Object} & \textbf{Goal} & \textbf{Long} & \textbf{Average} \\
\midrule
Diffusion Policy~\citep{chi2023diffusionpolicy}       & 78.5 & 87.5 & 73.5 & 64.8 & 76.1 \\
OpenVLA~\citep{kim2024openvla}                & 84.7 & 88.4 & 79.2 & 53.7 & 76.5 \\
SpatialVLA~\citep{qu2025spatialvla}             & 88.2 & 89.9 & 78.6 & 55.5 & 78.1 \\
CoT-VLA~\citep{Zhao2025CoTVLAVC}                & 87.5 & 91.6 & 87.6 & 69.0 & 83.9 \\
GR00T-N1~\citep{bjorck2025gr00tn1}               & 94.4 & 97.6 & 93.0 & 90.6 & 93.9 \\
F1-VLA~\citep{lv2025f1}                     & 98.2 & 97.8 & 95.4 & 91.3 & 95.7 \\
InternVLA-M1~\citep{chen2025internvlam1}          & 98.0 & 99.0 & 93.8 & 92.6 & 95.9 \\
Discrete Diffusion VLA~\citep{liang2025discretedvla} & 97.2 & 98.6 & 97.4 & 92.0 & 96.3 \\
GR00T-N1.6~\citep{bjorck2025gr00tn1}             & 97.7 & 98.5 & 97.5 & 94.4 & 97.0 \\
OpenVLA-OFT~\citep{kim2025openvla-oft}            & 97.6 & 98.4 & 97.9 & 94.5 & 97.1 \\
X-VLA~\citep{zheng2025xvla}                  & 98.2 & 98.6 & 97.8 & \textbf{97.6} & 98.1 \\
$\pi_0$-Fast~\citep{pertsch2025fast} {\scriptsize \color{gray} (bs=32, 30K steps)} & 96.4 & 96.8 & 88.6 & 60.2 & 85.5 \\
$\pi_0$~\citep{black2024pi0} {\scriptsize \color{gray} (bs=32, 30K steps)}   & 96.8 & 98.8 & 95.8 & 85.2 & 94.2 \\
Qwen3-VL-PI~\citep{starvla2025} {\scriptsize \color{gray} (bs=32, 30K steps)} & 95.2 & 99.0 & 96.2 & 88.4 & 94.7 \\  
$\pi_{0.5}$~\citep{black2025pi05} {\scriptsize \color{gray} (bs=256, 30K steps)}            & \textbf{98.8} & 98.2 & 98.0 & 92.4 & 96.9 \\    
ABot-M0~\citep{yang2026abotm0} {\scriptsize \color{gray} (bs=32, 30K steps)}               & \underline{98.6} & \underline{99.6} & \underline{98.6} & 94.8 & 97.9 \\
ABot-M0~\citep{yang2026abotm0} {\scriptsize \color{gray} (bs=64, 40K steps)}               & \textbf{98.8} & \textbf{99.8} & \textbf{99.0} & \underline{96.6} & \textbf{98.6} \\
\midrule 
\textbf{PRTS (Ours)} {\scriptsize \color{gray} \textbf{(bs=32, 30K steps)}} & \textbf{98.8} & \textbf{99.8} & 98.4 & \underline{96.6} & \underline{98.4} \\
\bottomrule
\end{tabular}
\caption{\textbf{Evaluation results on the LIBERO benchmark.} Post-training budget is indicated in gray next to the model names. Best results are in {bold}, and second-best results are {underlined}. PRTS matches the strongest prior results with global batch size $32$ and $30$K post-training steps; ABot-M0 under the same budget is included for a controlled comparison.}
\label{tab:libero_benchmark}
\end{threeparttable}
\end{table}

\subsubsection{LIBERO-Plus}
Table~\ref{tab:libero_plus_benchmark} reports zero-shot evaluation on LIBERO-Plus using the same checkpoint post-trained only on standard LIBERO. LIBERO-Plus introduces seven controlled perturbation axes, so performance reflects whether a policy can preserve task execution under shifted camera views, robot states, language, lighting, backgrounds, sensor noise, and object layouts. PRTS achieves the best average SR ($81.4\%$), outperforming $\pi_{0.5}$ ($80.7\%$) despite using an $8\times$ smaller LIBERO post-training batch.
More importantly, the gains are concentrated on quite challenging perturbations: \textit{Robot} ($+14.4$), \textit{Background} ($+15.3$), and \textit{Language} ($+1.1$), while remaining competitive across all other axes.
These results show that the CRL pre-trained backbone provides a robust goal-conditioned prior.
After fine-tuning on standard LIBERO, the policy can still identify and execute the intended task under changes in robot state, scene appearance, and LLM-rewritten instructions.

\begin{table}[htbp]
\centering
\renewcommand{\arraystretch}{1.1} 
\begin{threeparttable}

\resizebox{\textwidth}{!}{ 
\begin{tabular}{l c c c c c c c c}
\toprule
\textbf{Method} & \textbf{Camera} & \textbf{Robot} & \textbf{Language} & \textbf{Light} & \textbf{Background} & \textbf{Noise} & \textbf{Layout} & \textbf{Average} \\
\midrule
OpenVLA~\citep{kim2024openvla}            & 0.8  & 3.5  & 23.0 & 8.1  & 34.8 & 15.2 & 28.5 & 15.6 \\
WorldVLA~\citep{cen2025worldvla}           & 0.1  & 27.9 & 41.6 & 43.7 & 17.1 & 10.9 & 38.0 & 25.0 \\
NORA~\citep{hung2025nora}               & 2.2  & 37.0 & 65.1 & 45.7 & 58.6 & 12.8 & 62.1 & 39.0 \\
UniVLA~\citep{bu2025univla}             & 1.8  & 46.2 & 69.6 & 69.0 & 81.0 & 21.2 & 31.9 & 42.9 \\
RIPT-VLA~\citep{tan2025interactive}           & 55.2 & 31.2 & 77.6 & 88.4 & 91.6 & 73.5 & 74.2 & 68.4 \\
OpenVLA-OFT~\citep{kim2025openvla-oft}        & 56.4 & 31.9 & 79.5 & 88.7 & 93.3 & 75.8 & 74.2 & 69.6 \\
$\pi_0$~\citep{black2024pi0} {\scriptsize \color{gray} (bs=32, 30K steps)} & 13.8 & 6.0  & 58.8 & 85.0 & 81.4 & 79.0 & 68.9 & 53.6 \\
$\pi_0$-Fast~\citep{pertsch2025fast} {\scriptsize \color{gray} (bs=32, 30K steps)} & \underline{65.1} & 21.6 & 61.0 & 73.2 & 73.2 & 74.4 & 68.8 & 61.6 \\
Qwen3-VL-PI~\citep{starvla2025} {\scriptsize \color{gray} (bs=32, 30K steps)} & 43.6 & 46.6 & 81.8 & 88.4 & 89.4 & 
70.1 & 74.7 & 68.9 \\
ABot-M0~\citep{yang2026abotm0} {\scriptsize \color{gray} (bs=32, 30K steps)}               & \textbf{65.7} & 53.1 & 81.2 & \textbf{96.8} & \underline{95.0} & \textbf{87.5} & 81.3 & 78.7  \\
$\pi_{0.5}$~\citep{black2025pi05} {\scriptsize \color{gray} (bs=256, 30K steps)} & 64.0 & \underline{58.0}  & \underline{88.5} & \underline{96.6} & 81.4 & \textbf{87.5} & \textbf{85.9} & \underline{80.7} \\
\midrule
\textbf{PRTS (Ours)} {\scriptsize \color{gray} \textbf{(bs=32, 30K steps)}} & 59.5 & \textbf{72.4} & \textbf{89.6} & 96.5 & \textbf{96.7} & \underline{81.3} & \underline{83.3} & \textbf{81.4} \\
\bottomrule
\end{tabular}
}
\caption{\textbf{Evaluation results on the LIBERO-Plus benchmark.} Zero-shot robustness evaluation of the LIBERO-finetuned checkpoint across seven perturbation axes~\citep{fei2025liberoplus}. Post-training budget is indicated in gray when available. Best results are in {bold}, and second-best results are {underlined}.}
\label{tab:libero_plus_benchmark}
\end{threeparttable}
\end{table}

\subsubsection{LIBERO-Pro}
LIBERO-Pro (Table~\ref{tab:libero_pro_benchmark}) further tests whether the LIBERO-finetuned checkpoint follows the task specification rather than replaying memorized trajectories. We again perform zero-shot evaluation.
This benchmark was designed to expose a key flaw of current VLAs: a high success rate on standard LIBERO can stem from mechanical memorization of training scenarios rather than transferable task-solving strategies~\citep{zhou2025liberopro}. To this end, LIBERO-Pro introduces perturbation axes that directly test whether a policy can solve the specified task rather than replay a familiar trajectory.\footnote{Here we do not include the \textit{Environment} perturbation because the official codebase has not yet released the corresponding assets.}
The \textit{Semantic} axis paraphrases the instruction while preserving the task intent, similar to the \textit{Language} perturbations in LIBERO-Plus, and the \textit{Object} axis changes object appearance and size.
Notably, the most distinctive axes are \textit{Position} and \textit{Task}. \textit{Position} swaps the relative locations of the pick and place objects, going beyond a simple initial-position shift: the policy must infer the operational relation between the two objects and recompute the corresponding action sequence. \textit{Task} keeps the scene fixed but gives a novel instruction that asks the policy to manipulate a different target object, directly testing zero-shot novel instruction following.
PRTS reaches $58.8\%$ average SR, improving over $\pi_{0.5}$ by $+5.5$ points.
The most significant improvement appears on \textit{Task}: PRTS reaches $31.5\%$, while $\pi_{0.5}$ drops to $0.8\%$ and the second-best VLA reaches only $22.3\%$, establishing a substantial absolute margin of $9.2$ points.
PRTS also leads on \textit{Position} ($24.3\%$ vs. $20.8\%$ for $\pi_{0.5}$). Especially, we provide a more detailed numerical results on these two axes in Table~\ref{tab:additional_libero_pro} in Appendix~\ref{app:libero_pro_detail}.
Together, these two axes highlight the core advantage of our pre-training design: rather than merely imitating demonstration trajectories, PRTS learns goal-conditioned representations that encode which state-action pairs are truly reachable under a given language instruction.
This enables goal-reachability-aware decision making even when object layouts change or entirely new task instructions are introduced, precisely the capability targeted by contrastive RL pre-training.

\begin{table}[htbp]
\centering
\renewcommand{\arraystretch}{1.1} 
\begin{threeparttable}
\resizebox{\textwidth}{!}{ 
\begin{tabular}{l c c c c c}
\toprule
\textbf{Method} & \textbf{Semantic} & \textbf{Object} & \textbf{Position} & \textbf{Task} & \textbf{Average}\\
\midrule
OpenVLA~\citep{kim2024openvla}           & \textbf{97.2}  & \underline{93.0}  & 0.0 & 0.0 & 47.6 \\
Qwen3-VL-PI~\citep{starvla2025} {\scriptsize \color{gray} (bs=32, 30K steps)} & 95.5 & 70.4 & 4.3 & 3.9 & 43.5 \\
ABot-M0~\citep{yang2026abotm0} {\scriptsize \color{gray} (bs=32, 30K steps)} & \underline{97.1} & 82.5 & 7.1 & \underline{22.3} & 52.2\\
$\pi_0$~\citep{black2024pi0} {\scriptsize \color{gray} (bs=32, 30K steps)} & 90.5 & 90.5  & 0.0 & 0.0 & 45.3 \\
$\pi_{0.5}$~\citep{black2025pi05} {\scriptsize \color{gray} (bs=256, 30K steps)} & 95.8 & \textbf{96.0} & \underline{20.8} & 0.8 & \underline{53.3} \\
\midrule
\textbf{PRTS (Ours)} {\scriptsize \color{gray} \textbf{(bs=32, 30K steps)}} & 97.0 & 82.3 & \textbf{24.3} & \textbf{31.5} & \textbf{58.8} \\
\bottomrule
\end{tabular}
}
\caption{\textbf{Evaluation results on the LIBERO-Pro benchmark.} Zero-shot generalization evaluation of the LIBERO-finetuned checkpoint across semantic, object, position, and task variations~\citep{zhou2025liberopro}. Post-training budget is indicated in gray when available. Best results are in {bold}, and second-best results are {underlined}.}
\label{tab:libero_pro_benchmark}
\end{threeparttable}
\end{table}

\subsubsection{SimplerEnv with Visual Matching}
Table~\ref{tab:simpler_benchmark} evaluates PRTS on the WidowX robot in SimplerEnv, covering three pick-and-place tasks and one stacking task.
PRTS is post-trained on the real-world BridgeData V2 dataset~\citep{walke2023bridgedata} with the same global batch size and number of gradient steps as GR00T-N1.5, the strongest prior result on this benchmark.
PRTS reaches $77.1\%$ average SR, improving over the strongest prior result GR00T-N1.5 by $+15.2$ points and over starVLA based on the same VLM backbone Qwen3-VL-4B-Instruct by $+16.2$ points. Notably, PRTS also reaches $100\%$ on \textit{Put Eggplant in Yellow Basket} while leading on \textit{Put Spoon on Towel} and \textit{Stack Green Block on Yellow Block}.
These results show that PRTS can also achieve strong performance on a robot embodiment different from LIBERO, demonstrating its adaptability and generality across embodiments.


\begin{table}[htbp]
\centering
\renewcommand{\arraystretch}{1.1} 
\begin{threeparttable}
\resizebox{\textwidth}{!}{ 
\begin{tabular}{l c c c c c}
\toprule
\textbf{Method} &  \textbf{\makecell{Put Spoon\\On Towel}} & \textbf{\makecell{Put Carrot\\On Plate}} & \textbf{\makecell{Stack Green Block\\On Yellow Block}} & \textbf{\makecell{Put Eggplant\\In Yellow Basket}} & \textbf{Average} \\
\midrule
OpenVLA~\citep{kim2024openvla} & 0.0 & 0.0 & 0.0 & 4.1 & 1.0 \\
RT-1-X~\citep{openxembodiment} & 0.0 & 4.2 & 0.0 & 0.0 & 1.1 \\
OpenVLA-OFT~\citep{kim2025openvla-oft}            & 34.2 & 30.0 & 30.0 & 72.5 & 41.8 \\
SpatialVLA~\citep{qu2025spatialvla}             & 16.7 & 25.0 & 29.2 & \textbf{100.0} & 42.7 \\
CogACT~\citep{li2024cogact}                 & 71.7 & 50.8 & 15.0 & 67.5 & 51.3 \\
F1-VLA~\citep{lv2025f1}                 & 50.0 & \textbf{70.8} & 50.0 & 66.7 & 59.4 \\
Qwen3-VL-PI~\citep{starvla2025}            & \underline{78.1} & 46.9 & 30.2 & \underline{88.5} & 60.9 \\
GR00T-N1.5~\citep{bjorck2025gr00tn1}             & 75.3 & 54.3 & \underline{57.0} & 61.3 & \underline{61.9} \\
$\pi_0$~\citep{black2024pi0}                & 29.1 & 0.0 & 16.6 & 62.5 & 27.1 \\
$\pi_{0.5}$~\citep{black2025pi05} & 29.2 & 41.7 & 0.0 & 41.7 & 28.2\\
$\pi_0$-Fast~\citep{pertsch2025fast}           & 29.1 & 21.9 & 10.8 & 66.6 & 48.3 \\
\midrule 
\textbf{PRTS (Ours)}   & \textbf{83.3} & \underline{66.6} & \textbf{58.3} & \textbf{100.0} & \textbf{77.1} \\
\bottomrule
\end{tabular}
}
\caption{\textbf{Evaluation results on the SimplerEnv WidowX benchmark (Visual Matching).} All numbers are success rates (\%). Best results are in {bold}, and second-best results are {underlined}.}
\label{tab:simpler_benchmark}
\end{threeparttable}
\end{table}

\subsubsection{Real-World Evaluation}
\textbf{RealMan dual-arm system.}
Fig.~\ref{fig:realman_eval} reports success rates (SR) on the $11$-task RealMan Dual-Arm suite. 
To rigorously compare PRTS against other models, we deliberately deviate from the standard per-task fine-tuning paradigm. Instead, we post-train and evaluate a single shared checkpoint across all eleven tasks. This mixed-task post-training setting introduces inevitable cross-task interference, demanding a robust language-conditioned representation to maintain consistent goal-directed behavior across tasks. Furthermore, it establishes the foundation for the zero-shot instruction evaluation in Sec.~\ref{sec:generalization}.

Since all models share identical post-training data and recipes, the performance gap directly reflects the quality of the pre-trained representations. Overall, PRTS achieves a $95.9\%$ average SR, outperforming $\pi_{0.5}$ ($85.5\%$) and $\pi_0$ ($67.3\%$). The advantage of PRTS is especially prononced on tasks requiring precise contact (e.g., \textit{Stack Cups}, \textit{Flip Tennis Tube}) and long-horizon manipulation. Notably, on the two-minute \textit{Office Long Term} task, $\pi_{0.5}$ suffers from multi-task interference and collapses to $40\%$ SR, whereas PRTS maintains a robust $95\%$. This contrast confirms that CRL pre-training injects a temporally consistent goal-reachability prior, preserving stable execution throughout long-horizon rollouts.

\begin{figure}[htbp]
  \centering
  \begin{subfigure}[t]{0.72\textwidth}
    \centering
    \includegraphics[width=\linewidth]{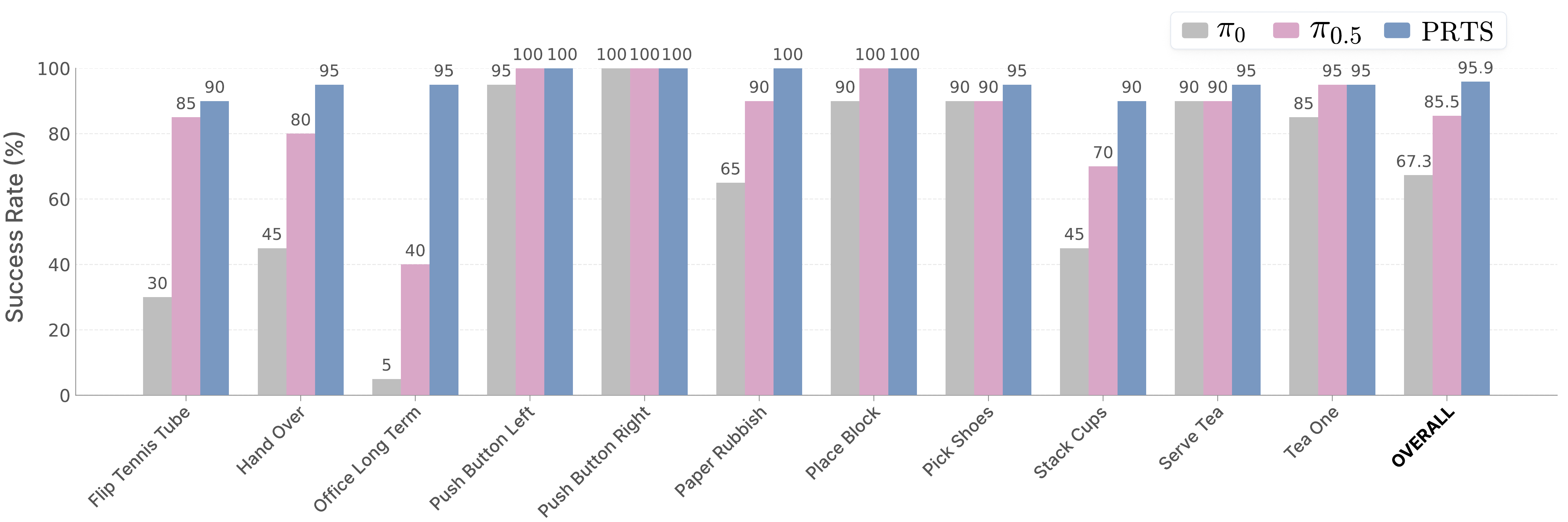}
    \caption{RealMan task suite.}
    \label{fig:realman_eval}
  \end{subfigure}\hfill
  \begin{subfigure}[t]{0.23\textwidth}
    \centering
    \includegraphics[width=\linewidth]{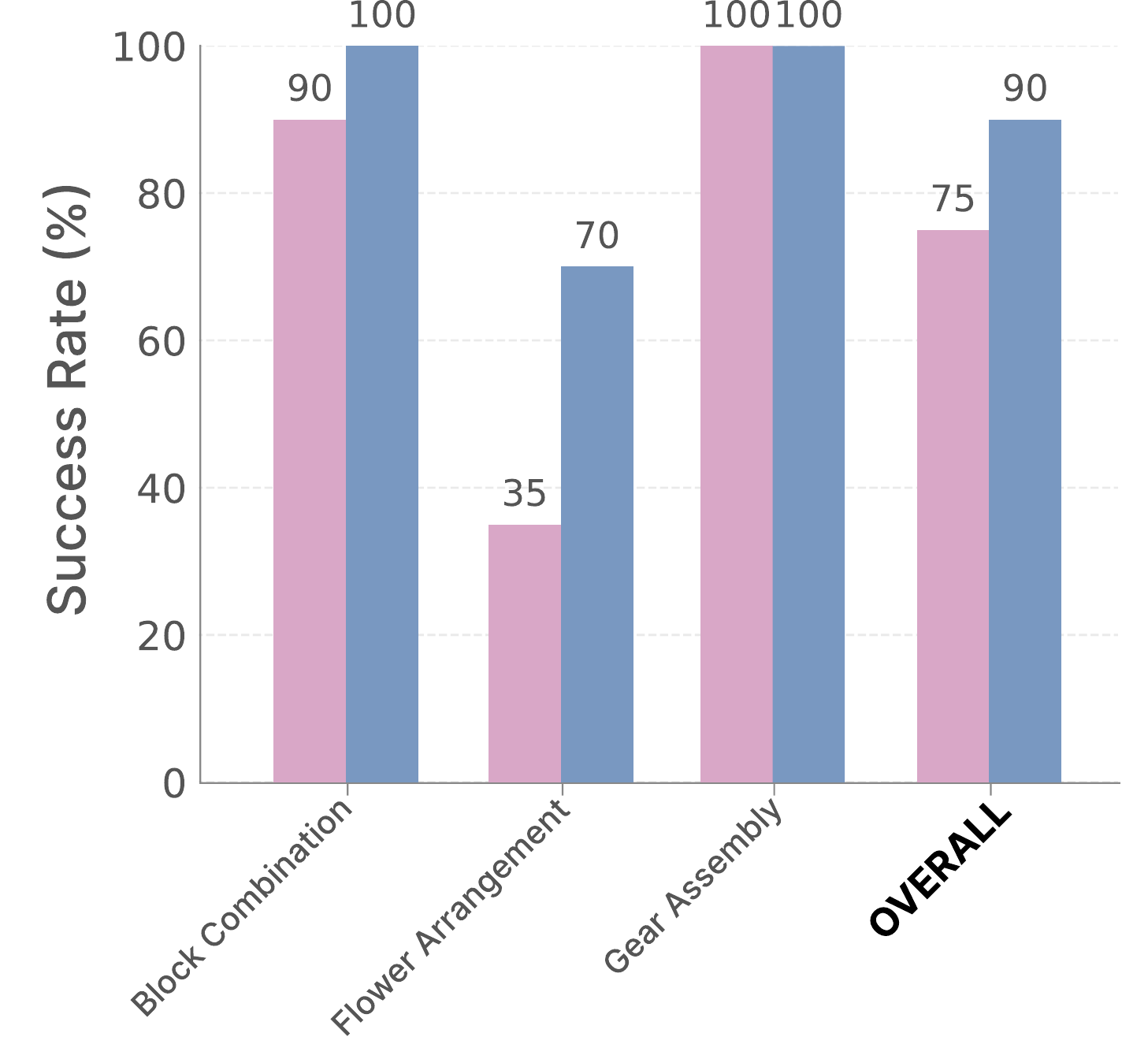}
    \caption{Flexiv task suite.}
    \label{fig:flexiv_eval}
  \end{subfigure}
  \caption{\textbf{Real-world evaluation.} 
  In both suites, per-task success rates averaged over $20$ real-world trials are reported. \textbf{(a) RealMan dual-arm task suite.} PRTS achieves $95.9\%$ average SR and reaches at least $90\%$ on all $11$ tasks, outperforming $\pi_{0.5}$ and $\pi_0$ under the same evaluation protocol. \textbf{(b) Flexiv single-arm task suite.} PPRTS outperforms the baselines in all three tasks spanning spatial-aligned and fine-grained and contact-rich manipulation.}
\end{figure}

\textbf{Flexiv single-arm system.}
To further assess the model's capability for fine-grained, contact-rich manipulation, we evaluate PRTS and baselines on the Flexiv platform under a per-task fine-tuning scheme.
This benchmark emphasizes precise execution through three challenging tasks:
\textit{Gear Assembly}---sequential high-precision insertion of three gears with varying sizes,
\textit{Block Combination}---fine-grained spatial alignment,
and \textit{Flower Arrangement}---a compounding long-horizon task requiring sequential placement of two distinct flowers.
As shown in Fig.~\ref{fig:flexiv_eval}, PRTS achieves a $90.0\%$ average success rate, outperforming $\pi_{0.5}$ by $+15.0\%$.
While both models reach saturated performance on \textit{Gear Assembly} ($100.0\%$), PRTS demonstrates stronger spatial grounding on \textit{Block Combination} ($100.0\%$ vs.\ $90.0\%$), and significantly alleviates the compounding execution errors that cause $\pi_{0.5}$ to struggle on the temporally extended \textit{Flower Arrangement} sequence ($70.0\%$ vs.\ $35.0\%$).
These results suggest that CRL pre-training equips the model with a stronger goal-reachability-aware representation, enabling more reliable action generation under both fine-grained spatial constraints and long-horizon sequential dependencies.

\subsection{Effect of Contrastive RL}\label{sec:ablation}
After the overall comparison, we now examine how much of PRTS's performance comes from the contrastive RL objective used during pre-training. To isolate this effect, we train a matched variant of PRTS using the same architecture, datasets, and training recipe as in Sec.~\ref{sec:setup}, except that the CRL coefficient in Eq.~\eqref{eq:loss-pre} is set to $\lambda_{\text{crl}}=0.0$ during pre-training. After pre-training, we attach the same flow-matching action expert to both PRTS and its non-CRL variant, and post-train them on standard LIBERO with global batch size $32$ for $30$K gradient steps. We evaluate the resulting models on standard LIBERO and further test them zero-shot on LIBERO-Plus and LIBERO-Pro without benchmark-specific adaptation. The results are shown in Table~\ref{tab:crl_ablation_summary}.

\begin{table}[htbp]
\centering
\renewcommand{\arraystretch}{0.9}
\begin{threeparttable}
\small
\begin{tabular*}{\textwidth}{@{\extracolsep{\fill}} l l c c c}
\toprule
\textbf{Benchmark} & \textbf{Suite / Perturbation} & \textbf{PRTS w/o CRL} & \textbf{PRTS w/ CRL} & \textbf{$\Delta$}\\
\midrule
\multirow{5}{*}{\textbf{LIBERO}}
& Spatial & 97.8 & 98.8 & \up{+1.0}\\
& Object & 99.8 & 99.8 & 0.0\\
& Goal & 98.0 & 98.4 & \up{+0.4}\\
& Long & 95.6 & 96.6 & \up{+1.0}\\
& \textbf{Average} & \textbf{97.8} & \textbf{98.4} & \up{\textbf{+0.6}}\\
\midrule
\multirow{8}{*}{\textbf{LIBERO-Plus}}
& Camera & 56.7 & 59.5 & \up{+2.8}\\
& Robot & 56.1 & 72.4 & \up{+16.3}\\
& Language & 87.3 & 89.6 & \up{+2.3}\\
& Light & 96.4 & 96.5 & \up{+0.1}\\
& Background & 95.8 & 96.7 & \up{+0.9}\\
& Noise & 72.3 & 81.3 & \up{+9.0}\\
& Layout & 82.9 & 83.3 & \up{+0.4}\\
& \textbf{Average} & \textbf{76.5} & \textbf{81.4} & \up{\textbf{+4.9}}\\
\midrule
\multirow{5}{*}{\textbf{LIBERO-Pro}}
& Semantic & 96.8 & 97.0 & \up{+0.2}\\
& Object & 84.5 & 82.3 & \down{-2.2}\\
& Position & 13.8 & 24.3 & \up{+10.5}\\
& Task & 20.1 & 31.5 & \up{+11.4}\\
& \textbf{Average} & \textbf{53.8} & \textbf{58.8} & \up{\textbf{+5.0}}\\
\bottomrule
\end{tabular*}
\caption{\textbf{Effect of contrastive RL pre-training across the LIBERO benchmark family.}
Both variants are post-trained on the same standard LIBERO dataset and evaluated zero-shot on LIBERO-Plus and LIBERO-Pro.
All numbers are success rates (\%), and $\Delta$ reports PRTS w/ CRL minus PRTS w/o CRL.}
\label{tab:crl_ablation_summary}
\end{threeparttable}
\end{table}

On standard LIBERO, CRL improves the average SR from $97.8\%$ to $98.4\%$ under the same post-training budget. The strongest improvements appear on \textit{Spatial} and \textit{Long} (both $1.0\%$), suggesting that the CRL-shaped representations benefit relational grounding and long-horizon execution on the in-distribution benchmark.

The advantage becomes much clearer under zero-shot evaluation on LIBERO-Plus.
Specifically, CRL improves the average SR by $4.9\%$, demonstrating substantial gains along the \textit{Robot} ($16.3\%$) and \textit{Noise} ($9.0\%$), alongside notable improvements in the \textit{Camera} ($2.8\%$) and \textit{Language} ($2.3\%$).
Across these permutation dimensions, the policy must execute the intended task consistently despite shifts in proprioceptive initialization, perceptual inputs, or language instructions. Under such distribution shifts, successful execution relies not on memorizing trajectories from post-training, but rather on preserving the language-conditioned goal and effectively re-grounding it within the perturbed observation stream.
By explicitly aligning state representations with high-level task semantics during pre-training, CRL equips the policy with this capability. 
Consequently, PRTS w/ CRL exhibits enhanced robustness when the path from observation to action is disturbed while the underlying task semantics remain fixed.
These results supports our hypothesis that contrastive pre-training inherently strengthens the goal-conditioned representations leveraged by the downstream action expert, moving beyond mere action imitation on the nominal training distribution.

The evaluation on the LIBERO-Pro further corroborates these findings by testing the model's capacity for zero-shot generalization under severe semantic and structural shifts. Here, CRL yields a $5.0\%$ increase in average SR, driven primarily by gains on the \textit{Position} ($10.5\%$) and \textit{Task} ($11.4\%$). 
Unlike lower-level visual variations, these dimensions demand a higher degree of cognitive flexibility. They require the policy to actively reinterpret redefined instruction goals, identify new target relations, and synthesize novel action sequences, rather than simply retrieving and replaying original demonstration trajectories.
The suite-level breakdown in Table~\ref{tab:crl_ablation_libero_pro_detail} provides a more detailed view.
CRL effectively mitigates the severe performance degradation observed in the non-CRL variant under complex re-grounding scenarios. For example, when evaluating on novel spatial configurations (\textit{Object-Position}), the SR improves from $4.8\%$ to $36.0\%$. Furthermore, CRL yields substantial gains in adapting to new task logics, improving \textit{Spatial-Task} and \textit{Goal-Task} performance from $34.4\%$ to $62.2\%$ and $20.2\%$ to $33.8\%$, respectively.

\begin{table}[htbp]
\centering
\renewcommand{\arraystretch}{1.2}
\begin{threeparttable}
\resizebox{\textwidth}{!}{ 
\begin{tabular}{l c c c c c c c c}
\toprule
\multirow{2}{*}{\textbf{Method}} & \multicolumn{2}{c}{\textbf{LIBERO-Spatial}} & \multicolumn{2}{c}{\textbf{LIBERO-Object}} & \multicolumn{2}{c}{\textbf{LIBERO-Goal}} & \multicolumn{2}{c}{\textbf{LIBERO-Long}}\\
\cmidrule(lr){2-3} \cmidrule(lr){4-5} \cmidrule(lr){6-7} \cmidrule(lr){8-9}
& {Pos (Avg.)} & {Task (Avg.)} & {Pos (Avg.)} & {Task (Avg.)} & {Pos (Avg.)} & {Task (Avg.)} & {Pos (Avg.)} & {Task (Avg.)} \\
\midrule
PRTS w/o CRL & 26.8 & 34.4 & 4.8 & 10.0 & 13.0 & 20.2 & 10.4 & 15.6\\
PRTS w/ CRL  & 26.8 & 62.2 & 36.0 & 8.8 & 19.0 & 33.8 & 15.4 & 21.2\\
\midrule
\textbf{$\Delta$} & 0.0 & \up{+27.8} & \up{+31.2} & \down{-1.2} & \up{+6.0} & \up{+13.6} & \up{+5.0} & \up{+5.6}\\
\bottomrule
\end{tabular}
}
\caption{\textbf{Suite-level performance breakdown on LIBERO-Pro.}
Success rates (\%) are decomposed by the original LIBERO suites to illustrate performance under \textit{Position} (spatial adaptability) and \textit{Task} (procedural generalization) distribution shifts. 
The $\Delta$ denotes PRTS w/ CRL minus PRTS w/o CRL.}
\label{tab:crl_ablation_libero_pro_detail}
\end{threeparttable}
\end{table}

While the LIBERO-Plus results demonstrate that CRL provides robustness when underlying task semantics are fixed, the LIBERO-Pro evaluation reveals a deeper advantage: the capability to generalize when semantics are fundamentally altered. Because contrastive pre-training anchors state representations to goal reachability rather than specific visual trajectories, the policy avoids overfitting to nominal layouts. Consequently, PRTS w/ CRL can dynamically extrapolate learned behaviors to novel spatial and procedural configurations. This further validates that our approach equips the downstream action expert with a highly composable representation space, enabling true procedural generalization beyond mere robustness.

To further explore why CRL improves robustness and instruction-conditioned generalization, we visualize the state-action representations extracted from the pretrained backbone using t-SNE~\citep{van2008tsne}. 
Specifically, we sample trajectories from ten representative manipulation tasks in the RoboMind dataset covering diverse skills and effector. From each episode, we extract non-overlapping temporal chunks. Then for each chunk, we feed the corresponding observation, proprioceptive state, language instruction, and action tokens into the pretrained backbone, and use the hidden state at the \texttt{<|action\_end|>} token as the state-action representation since this token encodes the full multimodal context of the sequence.
These representations are extracted separately from the PRTS w/ CRL and PRTS w/o CRL models and projected into a unified t-SNE space. The obtained representations are shown in Fig.~\ref{fig:tsne_vis}.

\begin{figure}[htbp]
    \centering
    \includegraphics[width=\linewidth]{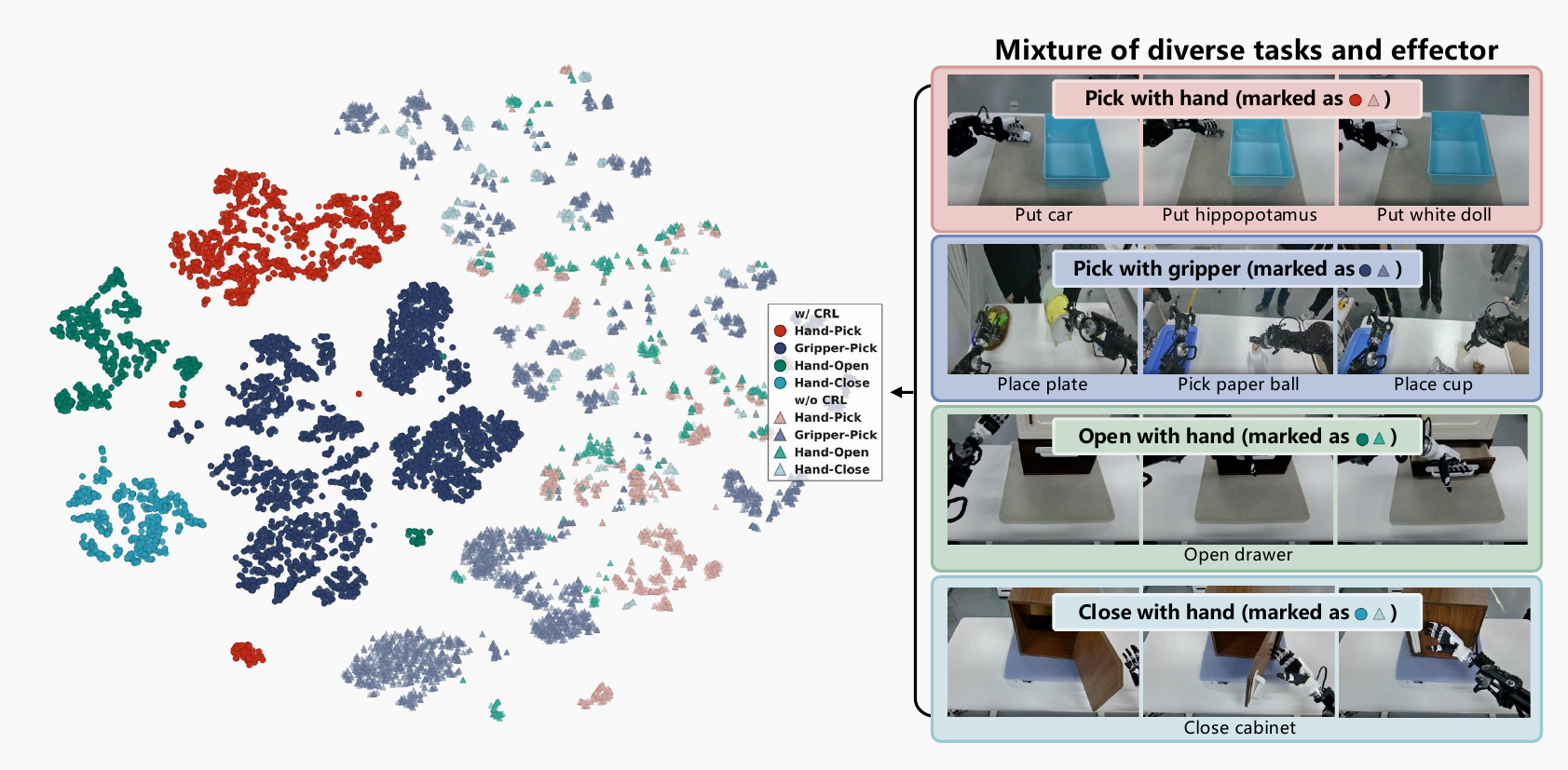}
    \caption{\textbf{T-SNE visualizations of learned task representations.}}
    \label{fig:tsne_vis}
\end{figure}

The inclusion of CRL organizes the embedding space around reusable manipulation primitives. Samples corresponding to \textit{Hand-Pick}, \textit{Gripper-Pick}, \textit{Hand-Open}, and \textit{Hand-Close} form tightly clustered, well-separated neighborhoods. This indicates that CRL successfully pulls together state-action pairs that share similar goal-reaching semantics, while pushing apart trajectories associated with divergent outcomes.
In contrast, the representations from the non-CRL variant remain highly entangled and diffuse. This suggests that pure behavioral cloning is prone to overfitting to trajectory-specific correlations, rather than capturing the shared, goal-reaching behavioral structure across demonstrations.
This structural alignment explains the performance gains across the LIBERO benchmark family. Under the perceptual shifts of LIBERO-Plus, this cleanly separated space allows the policy to robustly maintain the intended primitive. Furthermore, under the severe semantic shifts of LIBERO-Pro, this composable representation equips the policy to accurately ground novel instruction goals and dynamically synthesize required behaviors, entirely bypassing rote memorization.

\subsection{Controlled Real-World Generalization}
\label{sec:generalization}

The real-world results above evaluate whether PRTS can execute a broad set of tasks under the standard deployment setting. We next ask a stricter question: can the same policy preserve task success when the scene deviates from the post-training distribution in controlled ways? To answer Q3, we select four representative RealMan tasks, \textit{Paper Rubbish}, \textit{Place Block}, \textit{Pick Shoes} and \textit{Stack Cups}. These tasks cover bimanual receptacle interaction, precise target placement, object organization with closure and multi-object sequential stacking, respectively.

We evaluate each task under four axes of real-world generalization, visualized in Figs.~\ref{fig:realman_visual_generalization} and~\ref{fig:realman_task_generalization}. We arrange the axes from lower-level perceptual shifts to higher-level semantic shifts. \textit{Illumination} changes the observation statistics by turning off the large background light behind the robot. \textit{Spatial} moves the interaction objects to positions outside the post-training placement distribution. \textit{Object} replaces the manipulated objects with same-category instances that differ in color or size while keeping the instruction semantics fixed. Finally, \textit{Task} generalization recombines components from training tasks into new language instructions. Unlike object generalization, this axis changes the language-specified goal itself: the required skills have appeared during post-training, but they must be retrieved from different task contexts and recomposed in a new scene. These controlled perturbations serve as a physical counterpart to the distribution shifts studied in LIBERO-Plus and LIBERO-Pro, but expose the policy to real sensor, contact, and workspace changes.

\begin{figure}[htbp]
    \centering
    \includegraphics[width=0.85\linewidth]{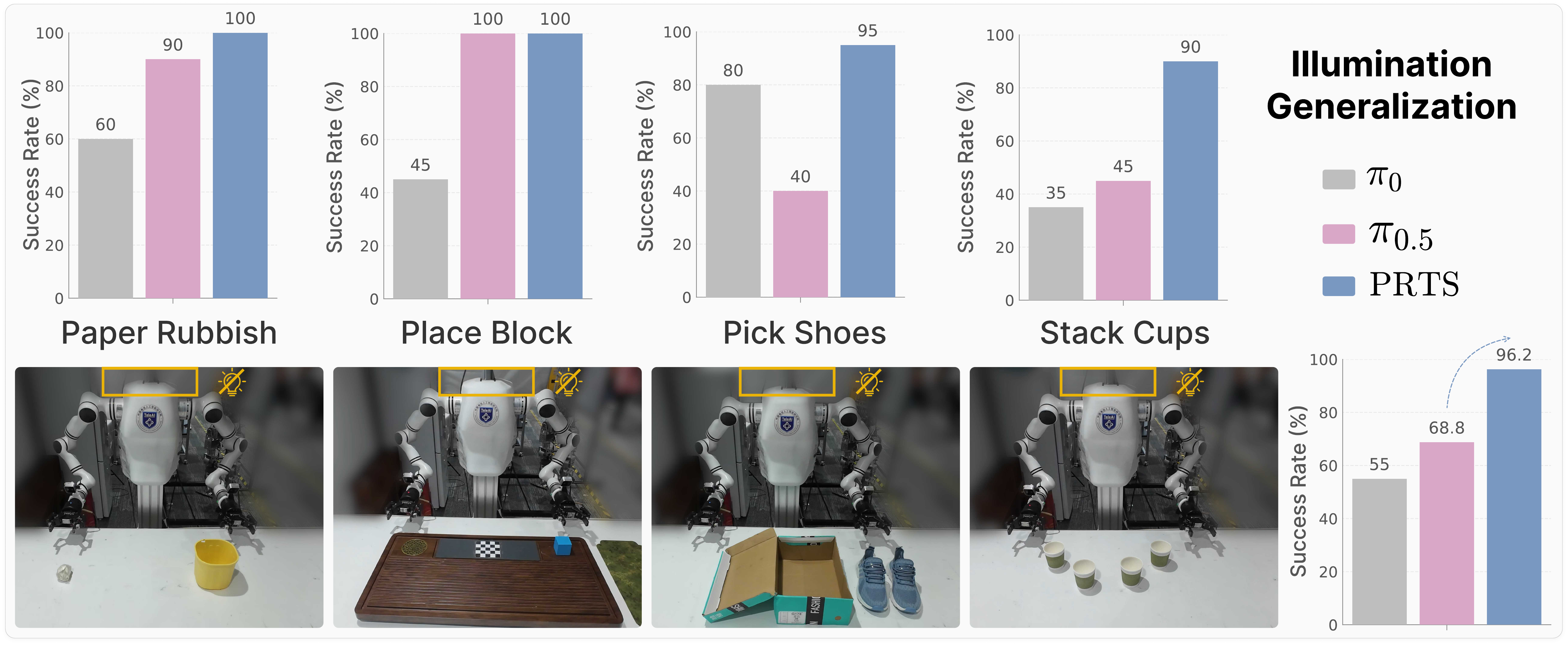}
    \vspace{0.4em}

    \includegraphics[width=0.85\linewidth]{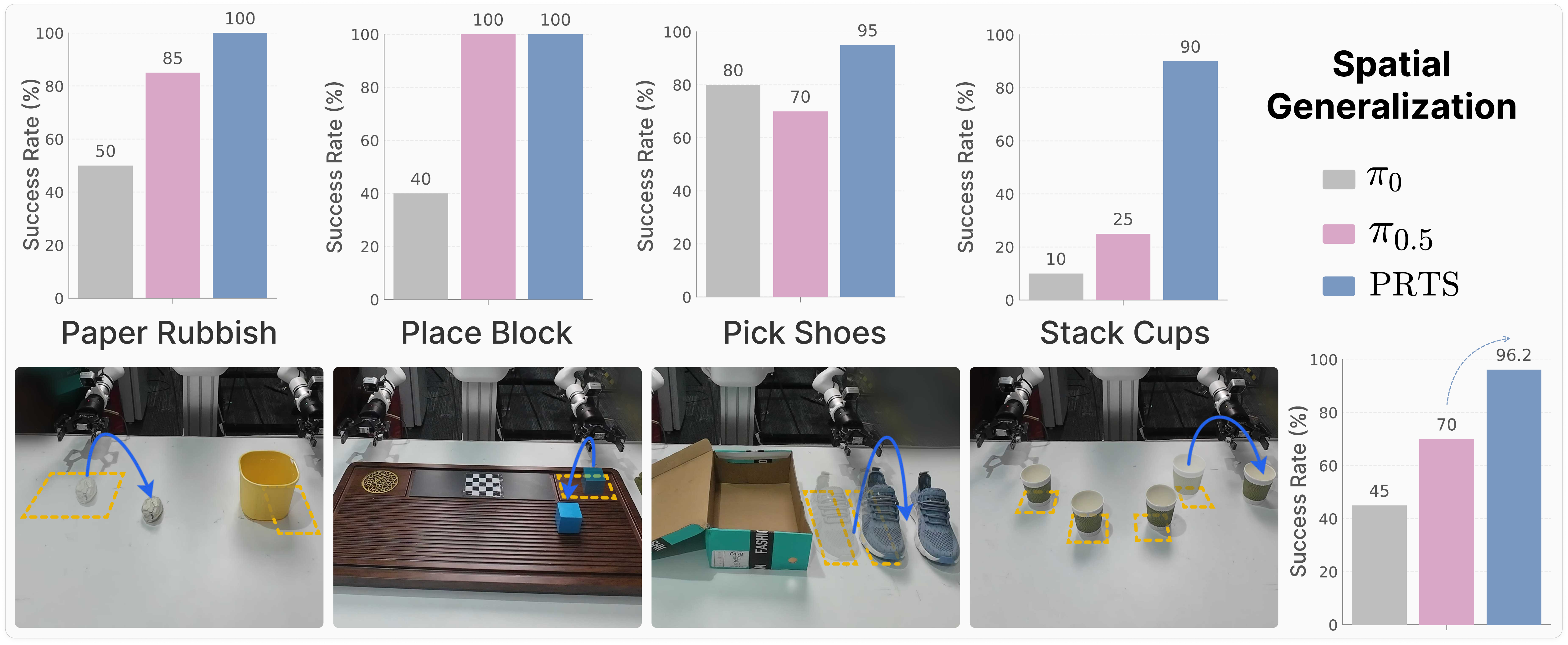}
    \vspace{0.4em}

    \includegraphics[width=0.85\linewidth]{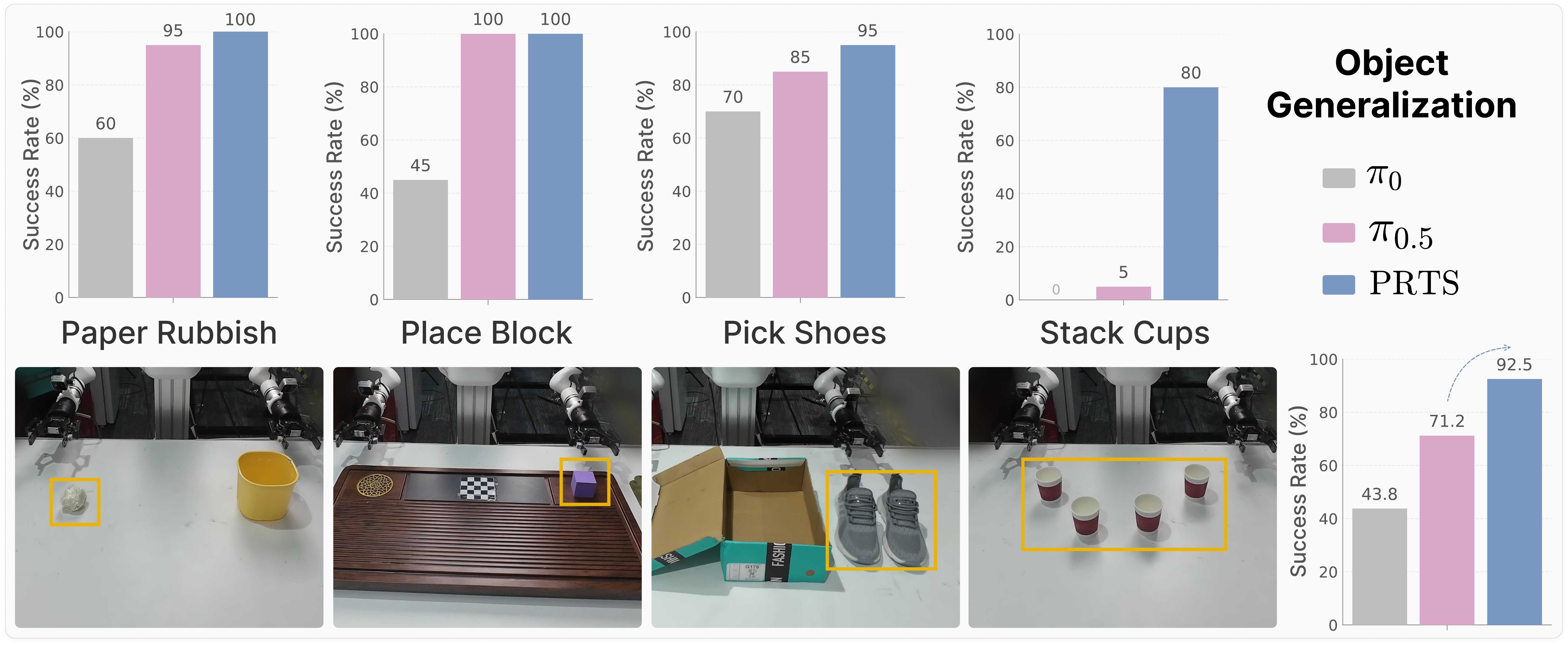}
    \caption{\textbf{Controlled visual, layout, and object shifts on the RealMan platform.} We visualize the three lower-level generalization settings used in the RealMan study. Illumination generalization changes the lighting condition, spatial generalization moves the manipulated objects outside the post-training layout distribution, and object generalization changes the visual instance while preserving the semantic object category and the task instruction. Quantitative results are reported in Table~\ref{tab:realman_generalization}.}
    \label{fig:realman_visual_generalization}
\end{figure}

\begin{figure}[htbp]
    \centering
    \includegraphics[width=0.85\linewidth]{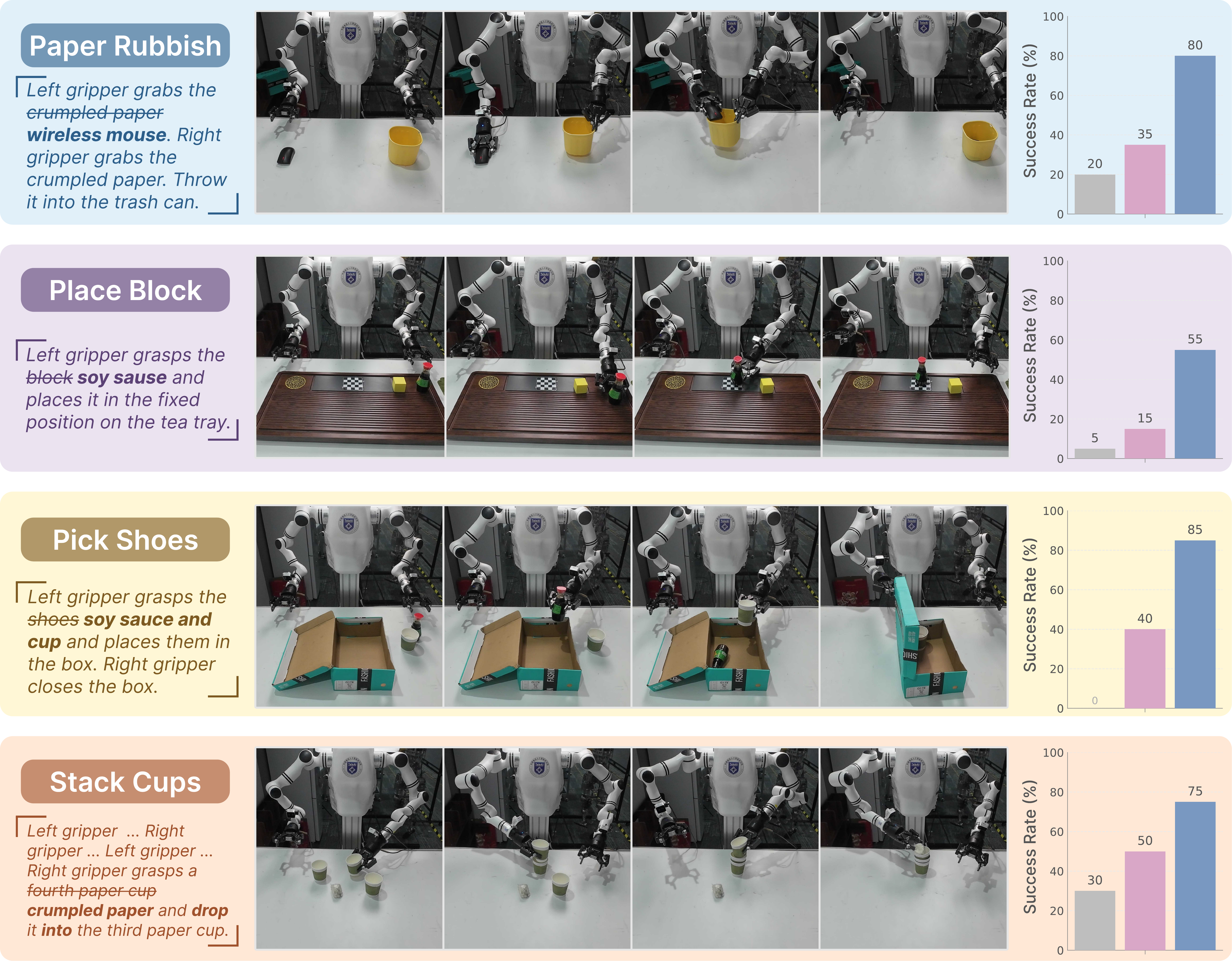}
    \caption{\textbf{Controlled task-level generalization on the RealMan platform.} Task generalization changes the language goal by recombining components of seen tasks, making it semantically stricter than illumination, spatial, or object-level shifts. This setting tests whether the policy can bind a novel instruction to feasible manipulation steps instead of merely recognizing familiar objects or replaying a familiar trajectory. Quantitative results are reported in Table~\ref{tab:realman_generalization}.}
    \label{fig:realman_task_generalization}
\end{figure}

Table~\ref{tab:realman_generalization} shows that PRTS is highly robust to visual, layout, and object shifts in the real world. The advantage is consistent across task types: \textit{Paper Rubbish} and \textit{Place Block} remain at $100\%$ success across these three axes, while \textit{Pick Shoes} and \textit{Stack Cups} stay at least $80\%$ despite changes in object appearance and placement. This suggests that PRTS is not simply replaying trajectories tied to a narrow scene configuration, but can preserve the intended manipulation behavior under substantial visual and layout variation.

Task generalization is more challenging because the language instruction itself changes. This is where the gap to prior VLAs becomes largest: PRTS averages $73.8\%$ success on the task-instruction axis, while $\pi_{0.5}$ reaches $35.0\%$ and $\pi_0$ reaches only $13.8\%$. Importantly, these are not arbitrary unseen skills. Each task recombines primitives that appeared during post-training, but places them under a new instruction and scene context. In \textit{Paper Rubbish}, the original trash-can grasp is replaced by grasping a flat wireless mouse from the long-horizon office task; this requires a qualitatively different, deeper and more planar grasp than the crumpled-paper grasp. Prior models tend to execute the original paper-rubbish grasping pattern and fail to stably grasp the mouse, while PRTS transfers the mouse-grasping skill into the new task and reaches $80\%$ success. In \textit{Place Block}, the target object is changed to a soy-sauce bottle that must be grasped at a narrow stable region, while both the bottle and the original block are present. This tests whether the policy follows the language-specified object rather than the scene's habitual block action; PRTS reaches $55\%$, whereas $\pi_0$ mostly continues to grasp the block and $\pi_{0.5}$ often selects the bottle but drops it due to poor grasp placement.

The other two task-generalization cases further test whether the model can preserve task structure while replacing key objects or substeps. In \textit{Pick Shoes}, the two shoes are replaced by a soy-sauce bottle and a paper cup, producing a visually different scene and requiring object-specific grasps before closing the box. $\pi_0$ often moves to where the shoes would have been and stalls, and $\pi_{0.5}$ can pick objects but frequently fails to place them into the box. PRTS instead completes the recomposed sequence with $85\%$ success. In \textit{Stack Cups}, the final paper-cup stacking step is replaced by dropping a crumpled paper ball into the third cup. This removes a key visual cue used for localizing the next stacking target and requires the policy to track the current execution stage precisely. PRTS reaches $75\%$, while prior models show larger precision degradation even in earlier cup-stacking stages. Overall, these results support our central claim that goal-reachability-aware pre-training yields a real-world policy that better preserves the connection between language goals and feasible state-action outcomes under distribution shifts, while also revealing a clear limitation: instruction-level recombination remains one of the hardest forms of real-world generalization.

\subsection{Robustness and Recovery under Human Interventions}
\label{sec:robustness}

The generalization study above changes the initial scene before execution begins. We next consider a more dynamic question: can the policy recover when the world changes \emph{during} execution? This setting is important because real deployments rarely proceed as a clean open-loop rollout. Objects may be moved by people, a grasped item may be taken away and reset, and a nearly completed task may be forced back to an earlier state. To answer Q4, we conduct intervention stress tests on four representative tasks: \textit{Paper Rubbish}, \textit{Pick Shoes}, and \textit{Place Block} on the RealMan dual-arm platform, and \textit{Gear Assembly} on the Flexiv platform.

\begin{figure}[htbp]
    \centering
    \includegraphics[width=0.85\linewidth]{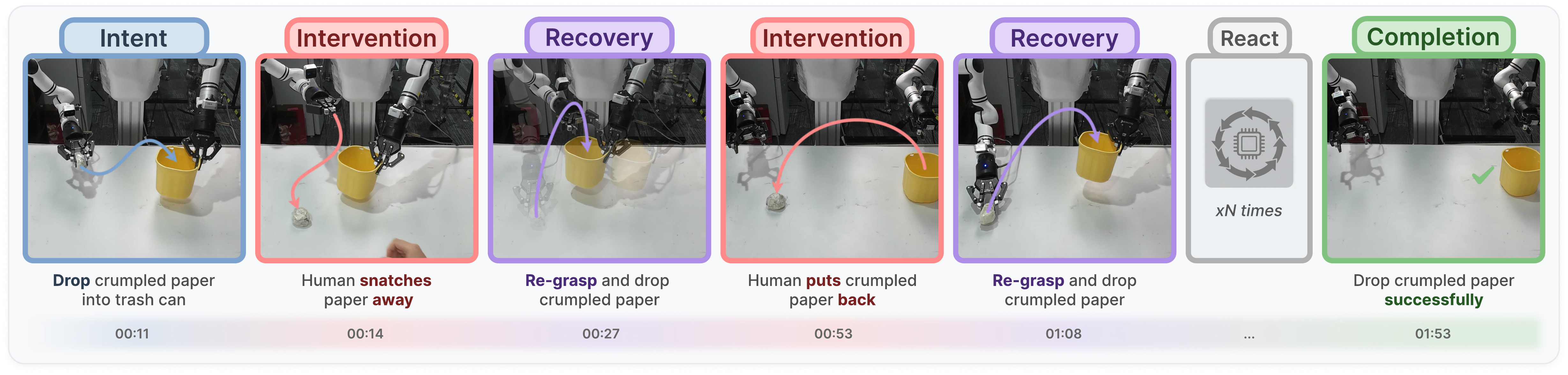}
    \vspace{0.35em}

    \includegraphics[width=0.85\linewidth]{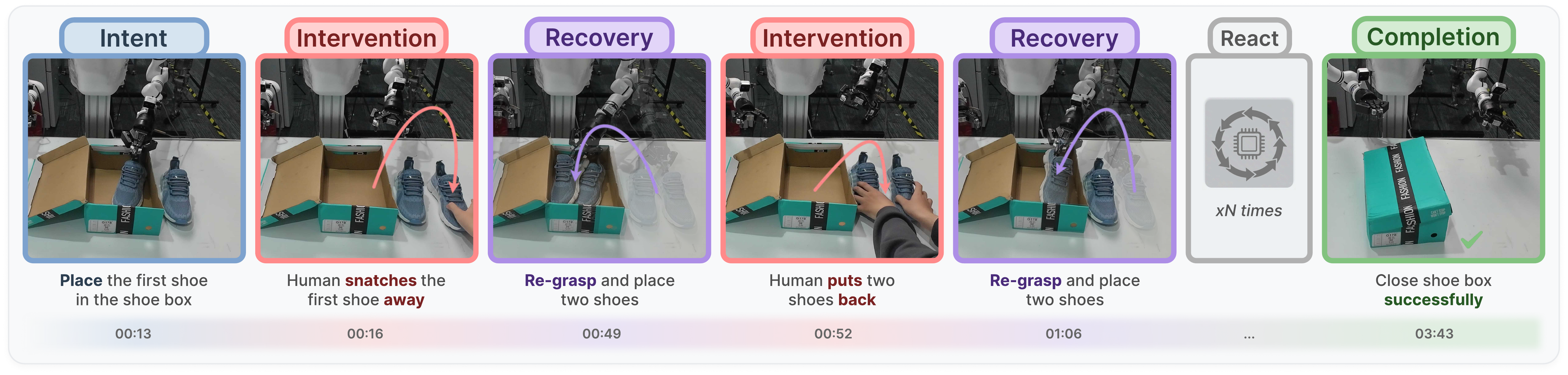}
    \vspace{0.35em}

    \includegraphics[width=0.85\linewidth]{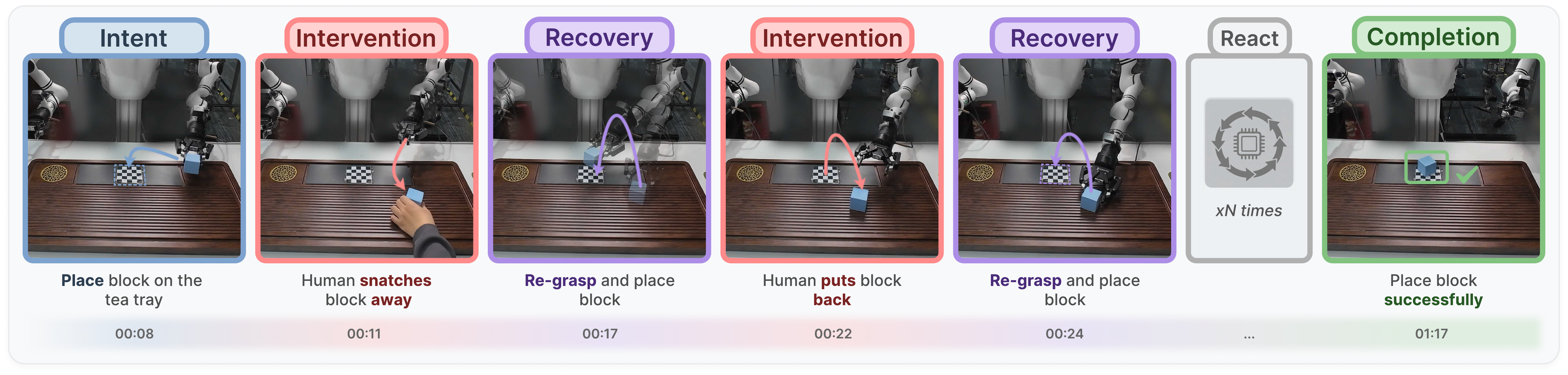}
    \vspace{0.35em}

    \includegraphics[width=0.85\linewidth]{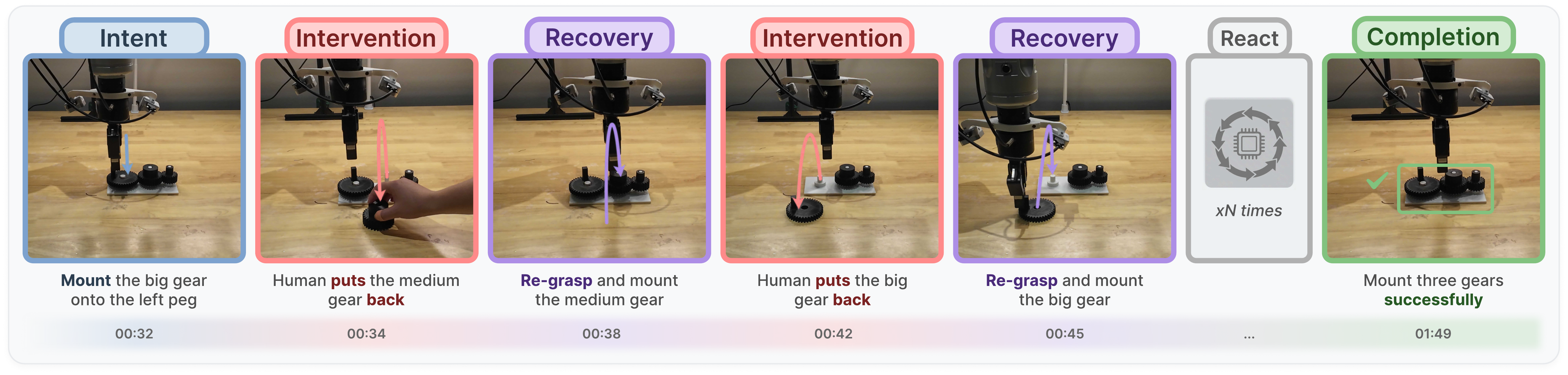}
    \caption{\textbf{Robustness and recovery under human interventions.} We visualize representative intervention rollouts in which a human repeatedly perturbs task-relevant objects after the policy has committed to an action or subgoal. The traces cover bimanual receptacle interaction, multi-object organization, precise placement, and contact-rich assembly. Successful recovery requires the policy to recognize the changed state, re-estimate task progress, and choose the next action according to the still-unreached language goal.}
    \label{fig:human_intervention}
\end{figure}

Figure~\ref{fig:human_intervention} visualizes the intervention protocol. Rather than only perturbing the initial condition, a human interrupts the rollout after the robot has already formed an intent or started executing a substep. The perturbation can invalidate the current action, reset an object to an earlier state, or change the progress of a multi-stage task. We therefore treat recovery as more than retrying a failed grasp: the policy must observe the new state, abandon stale assumptions about what has already been completed, and continue toward the original instruction. These trials are intended as qualitative stress tests rather than a fully instrumented benchmark, so we report the observed recovery behavior instead of a formal success-rate table.

Across roughly five intervention trials per task, PRTS consistently recovered from human perturbations and completed the task. \textit{Place Block} provides a controlled example with three interruptions: moving the block before grasping, removing it during transport and resetting it to the grasping area, and moving it back after successful placement while the arm is returning home. PRTS responds by tracking the moving target, stopping stale actions when the block is removed, or re-entering the full task procedure after post-completion reset, rather than treating the task as already finished.

The other tasks test the same recovery principle in different task structures. \textit{Paper Rubbish} and \textit{Pick Shoes} use similar object-removal and reset interventions, but require recovery in bimanual receptacle interaction and multi-object organization. The policy must re-localize the manipulated objects, infer which subgoals remain unfinished, and redo only the affected stages before completing the original instruction. \textit{Gear Assembly} uses a heavy intervention: after the assembly sequence has been completed, human operators move one of the three gears back to its original position. PRTS still detects the changed state and executes the corresponding gear grasping and placement again, showing that recovery remains goal-conditioned even after apparent task completion and in a contact-rich setting.

The baseline failures reveal why this setting is stricter than standard real-world evaluation. $\pi_0$ largely ignores the changed state and continues replaying fixed task phases, so all three \textit{Place Block} perturbations disrupt its execution. $\pi_{0.5}$ is more reactive under simple interruptions, such as tracking a moved object before grasping or retrying the current substep. However, under stronger interventions that revert a nearly completed or completed task to an earlier stage, it often treats tasks as completed and fails to re-enter the necessary earlier subgoals; this failure is especially clear in the post-completion \textit{Place Block} reset and the heavy \textit{Gear Assembly} intervention. In contrast, PRTS repeatedly re-selects actions based on the currently reachable path to the language goal. This supports the central role of goal-reachability-aware representations: robustness under intervention requires not only recognizing objects, but also judging which goal-conditioned state-action outcomes remain feasible after the execution history has been invalidated.

\begin{figure}[htbp]
    \centering
    \includegraphics[width=\linewidth]{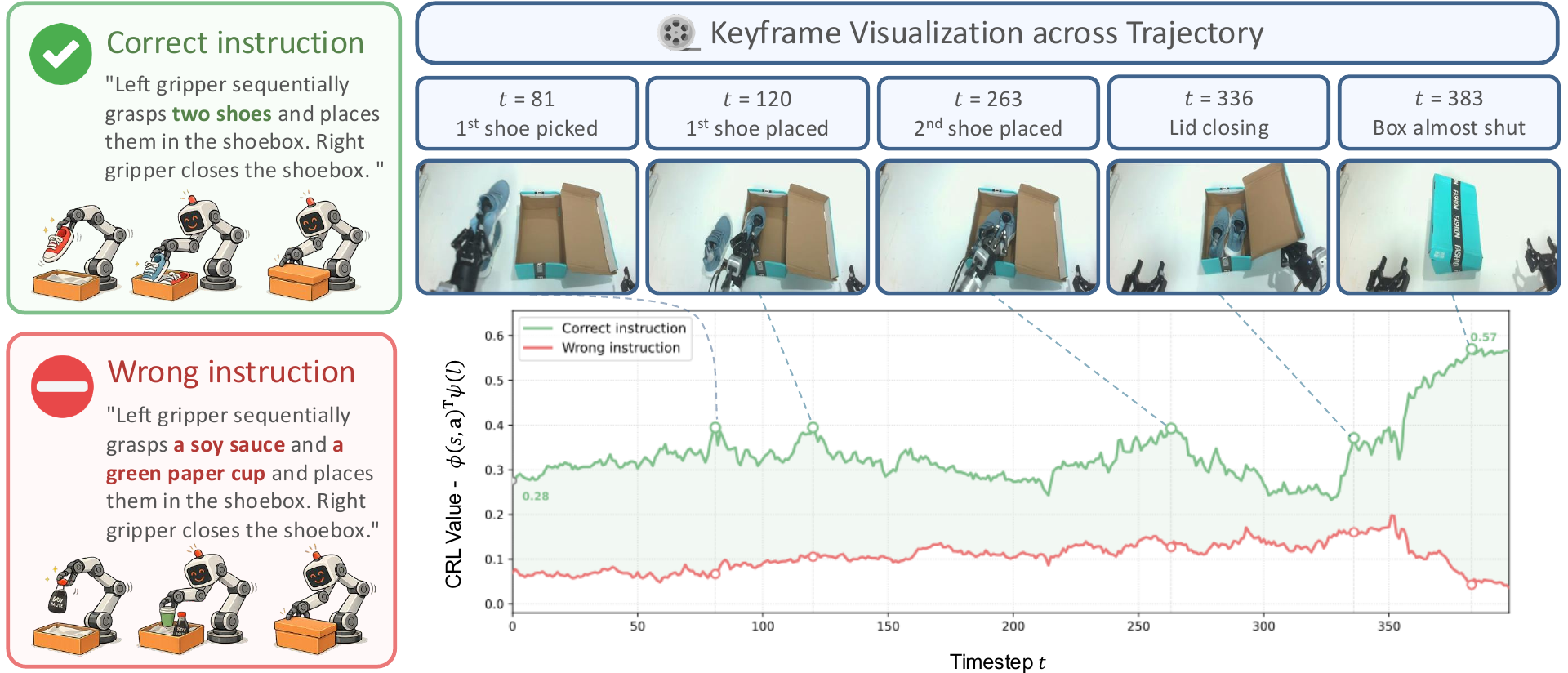}
    \caption{\textbf{CRL value on a held-out \textit{Pick Shoes} trajectory, scored by the pre-trained PRTS with no RealMan task suite fine-tuning.}
    \emph{Left:} the two instructions used to score the same demonstration---a correct one (green, top) that describes what the trajectory actually does, and a wrong one (red, bottom) that swaps the named grasp targets to objects (soy sauce, green paper cup) that never appear in the scene.
    \emph{Top right:} five keyframes from the trajectory with their timestep indices, marking the structural milestones of the task (first shoe lifted, first shoe placed, second shoe placed, lid closing, box almost shut).
    \emph{Bottom right:} the contrastive value $\phi(s_t,\mathbf{a}_t)^{\top}\psi(l)$ as a function of timestep $t$, computed under the correct instruction (green) and the wrong instruction (red) on the same state-action sequence; dashed lines link each keyframe to its corresponding marker on the green curve.
    The green curve forms progress-sensitive local peaks at the five keyframes and rises sharply to $0.57$ as the shoebox is closed, while the red curve stays in $0.04\, \sim \,0.20$  throughout and never crosses the green curve in any frames.}
    \label{fig:value_vis}
\end{figure}

\subsection{CRL Value Visualization}\label{sec:value}
The previous experiments establish that CRL pre-training improves policy robustness under distribution shift. Here we directly inspect its value.
Theorem~\ref{thm:equivalence} shows that the inner product $\phi(s_t,\mathbf{a}_t)^{\top}\psi(l)$ tracks the log-transformed goal-conditioned action value $\log Q^{\pi}_l(s_t,\mathbf{a}_t)$:
the score should increase as the trajectory approaches goal completion under the correct instruction, and remain low under a mismatched instruction.

To this end, we evaluate the CRL encoder heads on the \textbf{pre-traiend PRTS} before any post-training on RealMan task suite as we mention in Sec.~\ref{sec:benchmark}.
Demonstrations from the eleven RealMan tasks are therefore out-of-distribution for both the backbone and the contrastive heads, which directly tests whether the CRL-shaped representation captures goal-reachability semantics on entirely new trajectories without any post-training adaptation.
We randomly select a held-out \textit{Pick Shoes} demonstration trajectory and score it under two instructions on the \emph{same} state-action sequence $(s_t,\mathbf{a}_t)$: a correct instruction that describes the actual task (\textit{grasp two shoes and place them into the shoebox}), and a deliberately mismatched instruction that replaces the grasp targets with absent objects (\textit{soy sauce} and \textit{green paper cup}), which is exactly the instruction edit used in the novel-task generalization study (Sec.~\ref{sec:generalization}).
A faithful CRL value should assign consistently higher scores to the correct instruction throughout the episode.

Fig.~\ref{fig:value_vis} supports this behavior clearly. Along the \textit{Pick Shoes} trajectory, the value under the correct instruction shows a clear overall upward trend as the rollout progresses toward task completion, with local peaks consistently aligning with key manipulation milestones---firmly grasping the first shoe ($t{=}81$), placing it into the shoebox ($t{=}120$), placing the second shoe ($t{=}263$), and finally closing the lid ($t{=}336$ and $t{=}383$).
In particular, the score rises from $0.28$ at the initial frame to $0.57$ as the shoebox is nearly closed, with the largest jump concentrated in the final lid-closure phase. Although the curve is not strictly monotonic at every timestep, its major increases are tightly coupled with semantically meaningful task progress rather than tracking superficial visual changes or arm motion alone.
This shows that the CRL head captures goal-reaching structure instead of merely reacting to visual changes: the model correctly assesses the goal-reaching feasibility of the current state-action pair on a per-frame basis.

In contrast, the value under the wrong instruction stays consistently low ($0.04\, \sim \,0.20$) and even drifts downward toward the end of the rollout, because the executed actions are shoe-grasping behaviors that do not move toward a soy-sauce or paper-cup goal at any frame.
Notably, the red curve never crosses the green one across all timesteps, and the shaded green band visualizes this persistent margin. 
This confirms the two key properties expected from CRL: temporal-progress-sensitive value estimation and strong goal-conditioned discriminability without any post-training adaptation.

\subsection{Pre-training Efficiency}\label{sec:efficiency}
A natural concern is whether adding CRL token blocks and role-aware masking makes large-scale VLA pre-training prohibitively expensive. We show that the answer is no: with the fused CuTe kernel, the additional cost stays close to the FlashAttention~3 (FA3) baseline at the attention level, and thus end-to-end training maintains near-linear throughput scaling up to $64$ H100 GPUs.

Sec.~\ref{sec:language_crl} introduces the role-aware attention mask for the CRL token blocks and its CuTe-kernel implementation on top of FlashAttention~\citep{zadouri2026flashattention4}, which applies the mask at block granularity. Here we quantify its efficiency from two perspectives: a kernel microbenchmark that isolates per-layer attention time under fixed packing on $1$ H100 GPU, and a throughput study that measures aggregate training tokens/s when the same per-device packing configuration is scaled across $\{2,4,8,32,64\}$ H100 GPUs using DeepSpeed ZeRO-2 distributed strategy~\citep{rajbhandari2020zero}.

\textbf{Kernel microbenchmark.}
All rows in Fig.~\ref{fig:pretrain_efficiency}a share the same setup: a local batch with size $bs = 4$ and packed sequence length $4096$. We compare (i) regular BC forward using FA3 without the two CRL token blocks, (ii--iii) FlexAttention with Triton and FlashAttention as backends implementing the joint CRL + BC forward with the role-aware mask, and (iv) our fused CuTe kernel that feeds FlashAttention a block-sparse layout for the CRL blocks.
Notably, the per-layer attention forward time of the fused kernel is only $1.18\times$ of the FA3 reference while enabling the full CRL objective optimization.
The large gap between FlexAttention and CuTe mainly comes from how the five-role mask is implemented: expressing it through a general flexible-attention path introduces substantial overhead, while integrating the sparsity pattern directly into the fused kernel keeps the cost close to standard BC training.

\textbf{Throughput scaling.}
Fig.~\ref{fig:pretrain_efficiency}b plots aggregate tokens/s against GPU count for the full training step, using $4$ packed sample per GPU and DeepSpeed ZeRO-2 strategy as a controlled sweep of the distributed training run.
Against a linear reference anchored at the $2$-GPU point, scaling efficiency remains at almost $100\%$ through $8$ GPUs, $95.2\%$ at $32$, and $85.1\%$ at $64$. The drop at larger scale matches the expected communication overhead of ZeRO-2 rather than CRL-specific computation.

\begin{figure}[htbp]
    \centering
    \includegraphics[width=\linewidth]{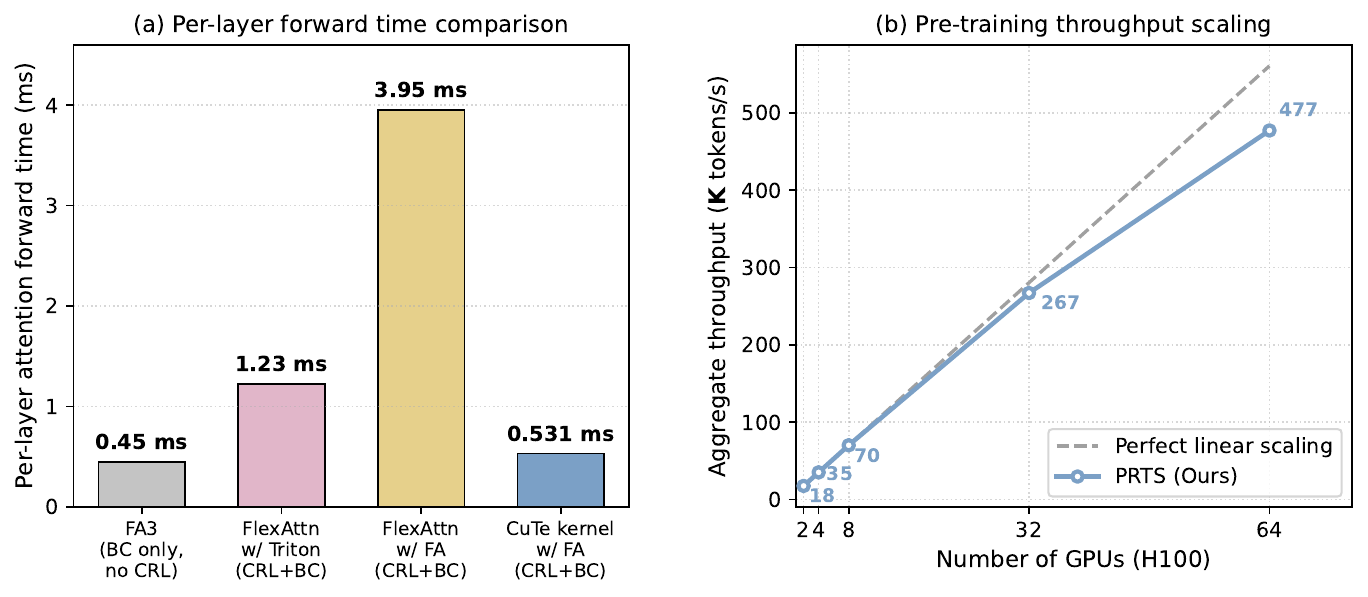}
    \caption{\textbf{Pre-training efficiency of PRTS.}
    \textbf{(a)} Per-layer attention forward time on $2$ H100 GPUs for BC-only FA3, two FlexAttention backends, and the fused CuTe kernel.
    \textbf{(b)} Aggregate token throughput vs.\ number of H100 GPUs; dashed line: linear scaling from the $2$-GPU point.}
    \label{fig:pretrain_efficiency}
\end{figure}

Together, these results show that CRL adds little overhead when implemented with the fused role-aware kernel: per-layer attention time remains close to the FA3 baseline, and end-to-end throughput scales nearly linearly up to $64$ H100 GPUs. This makes large-scale contrastive pre-training practical without sacrificing pre-training efficiency.

%% file: sec/7_conclusion.tex
\section{Conclusion}
\label{sec:conclusion}

We presented \textbf{PRTS}, a VLA foundation model that, for the first time, scales {reward-label-free} contrastive reinforcement learning into VLA pre-training itself. A language-conditioned contrastive objective shapes the backbone so that the pre-trained VLA encodes goal-reachability-aware representations, drawing supervision {entirely} from offline trajectory structure, without manual reward annotations or a separate value network (Sec.~\ref{sec:language_crl}).
A set of system-level optimizations---most notably a role-aware causal mask fused into FlashAttention via a custom CuTe kernel, together with sequence packing and sharded contrastive similarity computation---folds the contrastive head into the same forward pass as behavior cloning at near-BC throughput, making CRL pre-training feasible at the parameter scale of current VLA backbones (Sec.~\ref{sec:efficiency}).

Across LIBERO, LIBERO-Plus, LIBERO-Pro, SimplerEnv (WidowX), and real-world bimanual RealMan and single-arm Flexiv platforms, PRTS matches or exceeds the strongest prior VLAs at substantially smaller post-training compute, with the gap to baselines widening as evaluation moves further off the post-training distribution---novel instruction following, long-horizon execution, and human-intervention recovery (Sec.~\ref{sec:benchmark}--\ref{sec:robustness}). A direct examination of the value prediction further confirms that the same backbone produces a goal-reachability-aware and goal-discriminative score on out-of-distribution rollouts without any post-training adaptation (Sec.~\ref{sec:value}).

Returning to the premise that motivated this work---that robotic trajectory learning is inherently a goal-reaching process and that goal-conditioned RL has been shown to scale gracefully in pure decision-making domains~\citep{1000-layer}---these results extend that scaling story to VLA pre-training itself: scaling reward-free, language-conditioned CRL inside the VLA backbone delivers the goal-reachability awareness that not only substantially lifts overall task performance, but also gives rise to remarkably strong out-of-distribution generalization and robustness, most notably in zero-shot instruction following (highlighted in Fig.~\ref{fig:realman_task_generalization}) and recovery under human interventions (highlighted in Fig.~\ref{fig:human_intervention}).
We view this kind of goal-reachability-aware behavior as an important milestone signaling that VLA pre-training is no longer just a more-capable imitator, but an emerging foundation for general-purpose, goal-directed robotic intelligence.

%% file: sec/X_suppl.tex
\clearpage
\appendix
\section{Contributions}
\textbf{Data collection and standardization:} Jiangyuan Zhao, Yang Zhang, Fangzheng Yan

\textbf{Annotation and supplemental data:} Jiangyuan Zhao, Yang Zhang

\textbf{Training infrastructure:} Yang Zhang, Jiangyuan Zhao, Tian Li, Haitong Tang, Sen Fu, Xuan'er Wu, Qizhen Weng

\textbf{Model architecture and algorithm:} Yang Zhang

\textbf{Model pre-training:} Yang Zhang, Jiangyuan Zhao

\textbf{Model post-training and evaluation:} Yang Zhang, Jiangyuan Zhao, Chenyou Fan

\textbf{Writing and illustration:} Yang Zhang, Jiangyuan Zhao, Chenyou Fan, Chenjia Bai

\textbf{Project lead:} Yang Zhang

\textbf{Advisor:} Chenjia Bai

\section{Proof of Theorem~\ref{thm:equivalence}}\label{app:proof}

We first formalize the $l \to \text{sa}$ objective with explicit expectation notation. Let $\mathcal{D}$ denote the data distribution over trajectories $\tau$, and let $\mathcal{B} \sim \mathcal{D}^B$ denote a batch of $B$ samples. For a given language goal $l$, let $\mathcal{S}(l; \mathcal{B}) = \{(s_j, a_j, t_j, T_j) \in \mathcal{B} : \text{task}(s_j) = l\}$ be the set of state-action pairs in the batch belonging to task $l$, where $t_j$ is the timestep and $T_j$ is the trajectory length.

The $l \to \text{sa}$ loss is:
\begin{align}
    \mathcal{L}^{l \to \text{sa}}(\phi, \psi) = \mathbb{E}_{\mathcal{B} \sim \mathcal{D}^B} \left[ \frac{1}{|\mathcal{B}|} \sum_{i \in \mathcal{B}} \left( -\sum_{j \in \mathcal{S}(l_i; \mathcal{B})} q_{ij} \log \frac{\exp(\psi(l_i)^\top \phi(s_j, a_j))}{\sum_{k \in \mathcal{B}} \exp(\psi(l_i)^\top \phi(s_k, a_k))} \right) \right],
\end{align}
where the weights $q_{ij} = \frac{\gamma^{T_j - t_j}}{\sum_{j' \in \mathcal{S}(l_i; \mathcal{B})} \gamma^{T_{j'} - t_{j'}}}$ satisfy $\sum_{j \in \mathcal{S}(l_i; \mathcal{B})} q_{ij} = 1$.

\begin{proof}
Consider a fixed goal $l$ and the subset of the batch $\mathcal{S}(l; \mathcal{B}) = \{j_1, \ldots, j_m\}$ containing $m$ samples from this task. For this fixed $l$, the loss contribution is:
\begin{align}
    \mathcal{L}_l = -\sum_{r=1}^{m} q_{r} \log p_r,
\end{align}
where $p_r = \frac{\exp(\psi(l)^\top \phi(s_{j_r}, a_{j_r}))}{\sum_{k \in \mathcal{B}} \exp(\psi(l)^\top \phi(s_k, a_k))}$ and $q_r = \frac{\gamma^{T_{j_r} - t_{j_r}}}{\sum_{r'} \gamma^{T_{j_{r'}} - t_{j_{r'}}}}$.

The cross-entropy $\mathcal{L}_l$ is minimized if and only if $p_r = q_r$ for all $r \in \{1, \ldots, m\}$. Taking logarithms:
\begin{align}
    \psi(l)^\top \phi(s_{j_r}, a_{j_r}) - \log Z = \log q_r = (T_{j_r} - t_{j_r}) \log \gamma - \log \left(\sum_{r'} \gamma^{T_{j_{r'}} - t_{j_{r'}}}\right),
\end{align}
where $Z = \sum_{k \in \mathcal{B}} \exp(\psi(l)^\top \phi(s_k, a_k))$ is the partition function (constant with respect to $r$).

Rearranging:
\begin{align}
    \psi(l)^\top \phi(s_{j_r}, a_{j_r}) = (T_{j_r} - t_{j_r}) \log \gamma + \underbrace{\left(\log Z - \log \sum_{r'} \gamma^{T_{j_{r'}} - t_{j_{r'}}}\right)}_{\triangleq C(l, \mathcal{B})}.
\end{align}

Now we establish the connection to the $Q$-function. In deterministic expert demonstrations, the policy $\pi^*$ deterministically transitions toward the goal. The discounted occupancy measure (Definition in Section Preliminaries) becomes:
\begin{align}
    Q^{\pi^*}_l(s_t, a_t) &= (1-\gamma) \sum_{k=0}^{\infty} \gamma^k \mathbb{I}[s_{t+k} = s_g] \\
    &= (1-\gamma) \gamma^{T-t},
\end{align}
where the second equality holds because under deterministic demonstrations, the goal is reached exactly at step $T$ (so $\mathbb{I}[s_{t+(T-t)} = s_g] = 1$ and all other terms are 0).

Therefore:
\begin{align}
    (T-t) \log \gamma = \log Q^{\pi^*}_l(s_t, a_t) - \log(1-\gamma).
\end{align}

Substituting back:
\begin{align}
    \psi(l)^\top \phi(s, a) = \log Q^{\pi^*}_l(s, a) + \underbrace{\left(C(l, \mathcal{B}) - \log(1-\gamma)\right)}_{\triangleq C(l)}.
\end{align}

Taking expectation over the batch sampling $\mathcal{B}$, the batch-dependent constant $C(l, \mathcal{B})$ converges to a deterministic function $C(l)$ of the goal $l$ alone, yielding the desired result.
\end{proof}

\paragraph{Comparison with Standard CRL.}
Standard CRL \citep{eysenbach2022contrastiverl} achieves $f^*(s,a,g) = \log \frac{p^{\pi}_{\gamma}(s_{t+}=g|s,a)}{p(g)} + c(s,a)$ by geometrically sampling future states as goals. This creates an \emph{implicit} weighting: states $k$ steps ahead are sampled with probability $(1-\gamma)\gamma^{k-1}$, naturally emphasizing near-term futures.

Our temporal weighting scheme achieves the \emph{same} optimal solution but through \emph{explicit} soft labels $q_{ij} \propto \gamma^{T-t}$. The key differences are:
\begin{itemize}
    \item \textbf{Sampling vs. Weighting:} CRL samples one positive per anchor (geometrically); we weight all available positives in the batch.
    \item \textbf{Goal space:} CRL requires a continuous, sampleable goal space (future states); ours works with discrete, fixed goals (language instructions).
    \item \textbf{Variance:} Our multi-positive approach reduces variance by utilizing all positive samples in the batch, rather than a single geometric sample.
\end{itemize}

Both approaches drive the representation inner product toward $\log p^{\pi}_{\gamma}(s_{t+}=g|s,a)$, confirming that temporal weighting is the appropriate adaptation of CRL to the language-goal setting.

\section{Detailed real-world task descriptions and illustrations}\label{app:task_details}

This appendix provides the natural-language task definitions and rollout illustrations for the real-world evaluation suite. The robot platform configurations and the trial-level evaluation protocol are described in Sec.~\ref{sec:setup}; here we focus on the task content, object arrangements, and the manipulation behavior required by each instruction.

\begin{figure}[htbp]
    \centering
    \IfFileExists{imgs/RealMan30F_Real_Tasks.pdf}{
        \includegraphics[width=\linewidth]{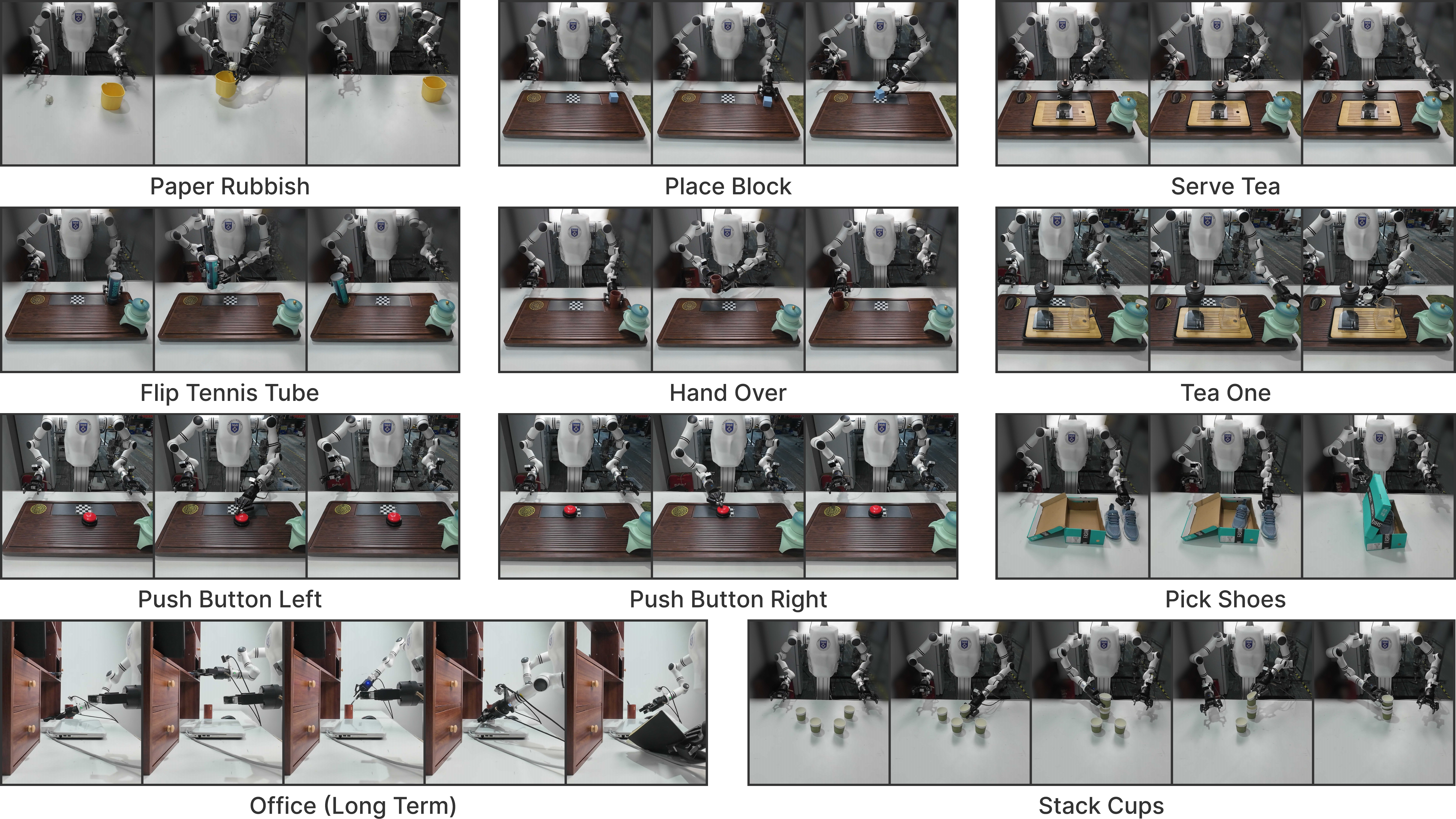}
    }{
        \fbox{\parbox{0.92\linewidth}{\centering Placeholder for \texttt{imgs/RealMan30F\_Real\_Task.pdf}.}}
    }
    \caption{\textbf{RealMan task suite.} We evaluate PRTS on eleven RealMan tasks, including ten short-horizon desktop manipulation tasks and one long-horizon office task. The montage visualizes representative rollouts for the task settings described in Appendix~\ref{app:task_details}.}
    \label{fig:realman_task_details}
\end{figure}

\subsection{RealMan dual-arm tasks}

The RealMan suite is designed to test bimanual coordination, instruction following, fine-grained contact, object transfer, and long-horizon sequencing in tabletop and office-like scenes.

\textbf{Paper Rubbish.}
The scene contains a trash can and a piece of crumpled paper. The left gripper holds the trash can while the right gripper grasps the paper and drops it into the can.

\textbf{Place Block.}
The robot grasps a block with the left gripper and places it at the specified location on the tea tray.

\textbf{Serve Tea.}
The robot grasps a teacup with the left gripper and places it on the green coaster.

\textbf{Flip Tennis Tube.}
The task begins with a tennis-ball tube on the workspace. The left gripper first grasps the tube, after which the right gripper takes the tube and places it at the designated position on the tray.

\textbf{Hand Over.}
The task requires a bimanual handover of a pen holder. The left gripper grasps the pen holder, and the right gripper receives it and places it at the fixed target position on the tea tray.

\textbf{Tea One.}
The robot grasps the white teacup on the coaster with the left gripper, places it at the fixed target position on the tea table, and then returns the left gripper to an initial-like pose.

\textbf{Push Button Left.}
The robot uses the left gripper to press the target button.

\textbf{Push Button Right.}
The robot uses the right gripper to press the target button.

\textbf{Pick Shoes.}
The robot organizes a pair of shoes into a shoebox. The left gripper sequentially picks up two shoes and places them inside the box, and the right gripper then closes the shoebox.

\textbf{Stack Cups.}
The robot stacks four paper cups into a vertical stack through alternating bimanual actions. The left gripper places the first cup on the table, the right gripper stacks the second cup on top, the left gripper stacks the third cup, and the right gripper completes the stack with the fourth cup.

\textbf{Office (Long Term).}
This task is a long-horizon office organization sequence with multiple objects and interaction types. The robot first picks up a wireless mouse and places it on the desk, then picks up a pen from the bookshelf with the right gripper, inserts it vertically into the pen holder, and releases it. It then presses the power strip with the left gripper, presses the switch with the right gripper, takes a book from the bookshelf with the left gripper, and places the book on the left side of the desk.

\begin{figure}[htbp]
    \centering
    \IfFileExists{imgs/Flexiv_Real_Tasks.pdf}{
        \includegraphics[width=\linewidth]{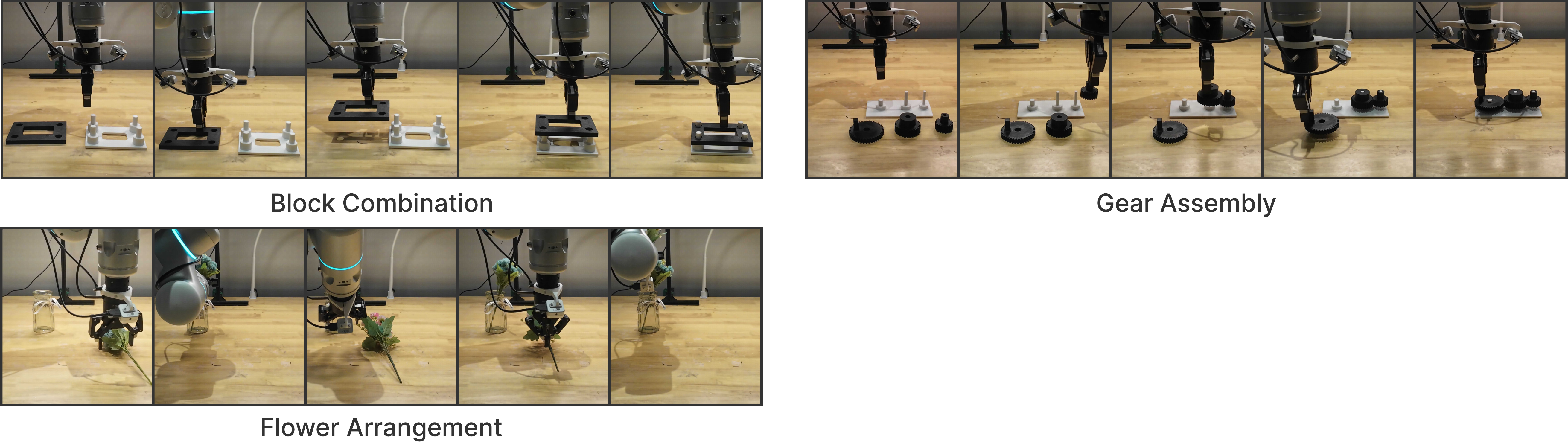}
    }{
        \fbox{\parbox{0.92\linewidth}{\centering Placeholder for \texttt{imgs/Flexiv\_Real\_Task.pdf}.}}
    }
    \caption{\textbf{Flexiv task suite.} The Flexiv evaluation complements the RealMan suite with single-arm manipulation tasks that emphasize precise alignment, assembly, and insertion-like object placement.}
    \label{fig:flexiv_task_details}
\end{figure}

\subsection{Flexiv single-arm tasks}

The Flexiv suite contains three single-arm tasks that require accurate geometric placement and contact-rich manipulation.

\textbf{Block Combination.}
The robot picks up the black square block with four corner holes and places it onto the white four-post alignment fixture, requiring the plate holes to align with the fixture posts.

\textbf{Gear Assembly.}
The robot sequentially mounts a small, medium, and big black gear onto the left, middle, and right posts of a white fixture, keeping all center holes perfectly aligned.

\textbf{Flower Arrangement.}
The robot executes a sequential pick-and-place operation, retrieving the blue and red flowers from the work surface via their stems and depositing them into the glass receptacle.

\begin{table}[htbp]
\centering
\renewcommand{\arraystretch}{1.1} 
\begin{threeparttable}

\begin{tabular}{l c c c}
\toprule
\textbf{Task} &  \textbf{$\pi_0$} & \textbf{$\pi_{0.5}$} & \textbf{PRTS} \\
\midrule
Flip Tennis Tube            & 30.0 & 85.0 & 90.0\\
Hand Over                   & 45.0 & 80.0 & 95.0\\
Office Long Term            & 5.0 & 40.0 & 95.0\\
Push Button Left            & 95.0 & 100.0 & 100.0\\
Push Button Right           & 100.0 & 100.0 & 100.0\\
Paper Rubbish               & 65.0 & 90.0 & 100.0\\
Place Block                 & 90.0 & 100.0 & 100.0\\
Pick Shoes                  & 90.0 & 90.0 & 95.0\\
Stack Cups                  & 45.0 & 70.0 & 90.0\\
Serve Tea                   & 90.0 & 90.0 & 95.0\\
Tea One                     & 85.0 & 95.0 & 95.0\\
\midrule
OVERALL                     & 67.3 & 85.5 & 95.9\\
\bottomrule
\end{tabular}

\caption{\textbf{Real-world evaluation on the RealMan dual-arm platform.} Per-task success rates over $20$ real-world trials. PRTS achieves $95.9\%$ average SR and reaches at least $90\%$ on all $11$ tasks, outperforming $\pi_{0.5}$ and $\pi_0$ under the same evaluation protocol.}

\end{threeparttable}
\end{table}

\begin{table}[htbp]
\centering
\renewcommand{\arraystretch}{1.2} 
\begin{threeparttable}

\resizebox{\textwidth}{!}{ 
\begin{tabular}{l c c c c c c c c c c c c c c c c}
\toprule
\multirow{2}{*}{\textbf{Method}} & \multicolumn{4}{c}{\textbf{Paper Rubbish}} & \multicolumn{4}{c}{\textbf{Place Block}} & \multicolumn{4}{c}{\textbf{Pick Shoes}} & \multicolumn{4}{c}{\textbf{Stack Cups}}\\
\cmidrule(lr){2-5} \cmidrule(lr){6-9} \cmidrule(lr){10-13} \cmidrule(lr){14-17}
& Light & Pos & Obj & Task & Light & Pos & Obj & Task & Light & Pos & Obj & Task & Light & Pos & Obj & Task \\
\midrule

$\pi_{0}$    & 60.0 & 50.0 & 60.0 & 20.0 & 45.0 & 40.0 & 45.0 & 5.0 & 80.0 & 80.0 & 70.0 & 0.0 & 35.0 & 10.0 & 0.0 & 30.0 \\

$\pi_{0.5}$    & 90.0 & 85.0 & 95.0 & 35.0 & 100.0 & 100.0 & 100.0 & 15.0 & 40.0 & 70.0 & 85.0 & 40.0 & 45.0 & 25.0 & 5.0 & 50.0 \\

\midrule 
\textbf{PRTS (Ours)}    & 100.0 & 100.0 & 100.0 & 80.0 & 100.0 & 100.0 & 100.0 & 55.0 & 95.0 & 95.0 & 95.0 & 85.0 & 90.0 & 90.0 & 80.0 & 75.0 \\

\bottomrule
\end{tabular}
}

\caption{\textbf{Controlled real-world generalization on the RealMan dual-arm platform.} We report success rates under illumination (\textit{Light}), spatial position (\textit{Pos}), object instance (\textit{Obj}), and task-instruction (\textit{Task}) shifts for four representative RealMan tasks. The task-instruction axis is the most semantically demanding since the language goal changes rather than only the visual scene.}

\label{tab:realman_generalization}

\end{threeparttable}
\end{table}

\newpage
\section{Detailed results on LIBERO-Pro Task and Position variations}\label{app:libero_pro_detail}

\begin{table}[htbp]
\centering
\renewcommand{\arraystretch}{1.0} 
\begin{threeparttable}

\resizebox{\textwidth}{!}{ 
\begin{tabular}{l c c c c c c c c}
\toprule
\multirow{2}{*}{\textbf{Method}} & \multicolumn{2}{c}{\textbf{libero-spatial}} & \multicolumn{2}{c}{\textbf{libero-object}} & \multicolumn{2}{c}{\textbf{libero-goal}} & \multicolumn{2}{c}{\textbf{libero-long}}\\
\cmidrule(lr){2-3} \cmidrule(lr){4-5} \cmidrule(lr){6-7} \cmidrule(lr){8-9}
& Pos (Avg.) & Task (Avg.) & Pos (Avg.) & Task (Avg.) & Pos (Avg.) & Task (Avg.) & Pos (Avg.) & Task (Avg.) \\
\midrule
OpenVLA~\citep{kim2024openvla}    & 0.0 & 0.0 & 0.0 & 0.0 & 0.0 & 0.0 & 0.0 & 0.0\\
Qwen3-VL-PI~\citep{starvla2025} {\scriptsize \color{gray} (bs=32, 30K steps)} & 14.4 & 0.0 & 0.2 & 0.0 & 1.2 & 8.8 & 1.2 & 6.8\\
ABot-M0~\citep{yang2026abotm0} {\scriptsize \color{gray} (bs=32, 30K steps)} & 13.4 & \underline{54.8} & 9.0 & \underline{9.8} & 5.6 & 10.6 & 0.2 & \underline{14.0}\\
$\pi_0$~\citep{black2024pi0} {\scriptsize \color{gray} (bs=32, 30K steps)} & 0.0 & 0.0 & 0.0 & 0.0 & 0.0 & 0.0 & 0.0 & 0.0\\
$\pi_{0.5}$~\citep{black2025pi05} {\scriptsize \color{gray} (bs=256, 30K steps)} & \underline{20.0} & 1.0 & 17.0 & 1.0 & \textbf{38.0} & 0.0 & \underline{8.0} & 1.0\\
\midrule
\textbf{PRTS (Ours)} {\scriptsize \color{gray} \textbf{(bs=32, 30K steps)}} & \textbf{26.8} & \textbf{62.2} & \textbf{36.0} & 8.8 & 19.0 & \textbf{33.8} & \textbf{15.4} & \textbf{21.2}\\
\bottomrule
\end{tabular}
}
\caption{\textbf{Detailed performance comparison on the \emph{Position} and \emph{Task} variations of LIBERO-Pro benchmark.} Since VLAs are trained on a fixed instruction distribution, they perform poorly under task perturbations. While conventional VLAs suffer catastrophic performance drops under these distributional shifts, our PRTS model exhibits superior generalization capabilities over instruction following. The best results are highlighted in bold and second best results are underlined.}
\label{tab:additional_libero_pro}
\end{threeparttable}
\end{table}